\newcommand{\real}{{\mathbb{R}}}
\newcommand{\reals}{\real}
\renewcommand{\natural}{{\mathbb{N}}}
\newcommand{\naturals}{\natural}
\newtheorem{theorem}{Theorem}[section]
\newtheorem{lemma}[theorem]{Lemma}
\newtheorem{assumption}[theorem]{Assumption}
\title{Trading Safety Versus Performance: Rapid Deployment of Robotic Swarms \\ with Robust Performance Constraints} 
\author{Yin-Lam Chow
    \affiliation{Institute for Computational \& Mathematical Engineering\\
	Stanford University\\ Stanford, CA 94305\\
        Email: ychow@stanford.edu
    }
}
\author{Marco Pavone
    \affiliation{Aeronautics
and Astronautics \\
	Stanford University\\ Stanford, CA 94305\\
        Email: pavone@stanford.edu
    }
}
\author{Brian M. Sadler 
    \affiliation{ Army Research Lab\\
	Adelphi, MD 20783\\
        Email: brian.m.sadler6.civ@mail.mil
    }
}
\author{Stefano Carpin
    \affiliation{
School of Engineering\\
	University of California\\
	Merced, CA 95343\\
    Email: scarpin@ucmerced.edu
    }	
}
\begin{document}

\maketitle    

\begin{abstract}
{\em In this paper we consider a stochastic deployment problem, where a robotic swarm is tasked with the objective of positioning at least one robot at each of a set of pre-assigned targets  while meeting a temporal deadline. Travel times and failure rates are stochastic but related, inasmuch as failure rates increase with speed. To maximize chances of success while meeting the deadline, a control strategy has therefore to balance safety and performance. Our approach is to cast the problem within the theory of constrained Markov Decision Processes, whereby we seek to compute policies that maximize the probability of successful deployment while ensuring that the expected duration of the task is bounded by a given deadline. To account for uncertainties in the problem parameters, we consider a robust formulation and we propose efficient solution algorithms, which are of independent interest. Numerical experiments  confirming our theoretical results are presented and discussed.}
\end{abstract}


\section{Introduction}

Recent technological advances have made possible the deployment of robotic swarms comprising hundreds of small, minimalistic, and inexpensive ground, air, and underwater mobile platforms \cite{Bonabeau.ea:99}. The potential advantages of robotic swarms are numerous. For example, it is possible to reduce the total implementation and operation cost, and add flexibility, robustness, and modularity with respect to monolithic approaches. Accordingly, planning and control for multi-robot systems (including robotic swarms) has received tremendous attention in the past decade from the control, robotics, and artificial intelligence communities, see  \cite{BulloRN} and references therein.

When deploying robotic swarms, safety and speed often come as \emph{contradicting} objectives: the faster it is required for robots to accomplish a task, the higher the chance that the task is not accomplished.
For example, robot batteries discharge more rapidly when robots operate at high velocities, thus shortening operational
time and increasing the probability that the assigned task will not be completed. Similarly, sensor accuracy often 
decreases with speed, thereby interfering with the ability of a robot to accomplish its mission (see, e.g., \cite{Pavlic2009}). Accordingly, the goal of this paper is to devise analysis tools and control algorithms to address such safety/performance trade-off within the context of the \emph{stochastic deployment problem}, whereby it is desired that a robotic swarm deploys within a given map so that each of a set of target locations is reached by at least one robot.  The set up is stochastic in the sense that traversal times and robot failures represent stochastic events,
 and we propose an approach that is robust to additive errors in the modeling of  traversal times. 

Deployment problems have been recently studied along a number of dimensions and with a variety of techniques, including coverage control through locational optimization \cite{cortes2004coverage, Schwager.ea:RSS06}, robot dispersion through minimalistic control strategies (e.g., random walks)\cite{Morlok2004,scout}, workload sharing in dynamic environments through distributed optimization \cite{pavone2011distributed}, deployment with  temporal logic specifications, e.g., through formal methods tools \cite{CarpinICRA2013, Kloetzer.Belta:TR07, ding2011automatic}, deployment under communication constraints through Partially Observable Markov Decision Processes \cite{BatalinTraRO2007,Fink2013, matignon2012coordinated}, and strategic deployment with complex mission specifications through the theory of constrained Markov Decision Processes \cite{ding2013strategic}. This list is necessarily incomplete (see \cite{BulloRN} for a thorough literature review on this problem), but it however outlines the main facets and tools employed for this problem in the last decade, and shows the research shift from results mostly focusing on the quality of the steady-state solution (e.g., coverage control) to studies enforcing constraints on the transient (e.g., strategic deployment).

Yet, despite the wealth of results available for the deployment problem and, more in general, for multi-robot coordination, to the best of the authors' knowledge no results exist today  that explore the trade-off between safe and rapid deployment in stochastic environments (perhaps
with the exception of \cite{Klavis2011}, where however a slightly different problem is considered). The purpose of this paper is to bridge this gap. Our strategy is to frame the problem within the paradigm of constrained Markov Decision Processes (CMDPs). Specifically, our contributions are as follows:
\begin{enumerate}
\item We show how the rapid \emph{single-robot}  deployment problem, where it is desired to trade-off safety versus performance,  can be modeled as a  \emph{robust} CMDP (RCMDP). The robustness aspect of the formulation stems from the fact that the parameters of the constraint cost functions belong to a (known) uncertainty set.
\item We show that an optimal policy for a RCMDP can be determined using a
linear optimization problem based on the concept of occupation measures developed
for CMDPs. 
\item We demonstrate how the linear optimization problem can be efficiently solved
using a transformation that reduces the number of constraints from (possibly) exponential 
to linear in the size of the problem.
\item We illustrate how RCMDPs can be used to determine both centralized and (minimalistic) decentralized policies to solve the
rapid \emph{multi-robot} deployment problem.
\end{enumerate}

The rest of the paper is organized as follows. Section~\ref{sec:CMDP} 
formally introduces
the CMDP model and provides well-known results, together with pointers to selected references.
The RCMDP model is introduced in Section \ref{sec:CMDP} as well. 
Section \ref{sec:formulation} defines the rapid deployment
problem  for both the single- and multi-robot cases, and  shows how RCMDPs together with a task assignment strategy can be used for its solution.
An efficient algorithm for the RCDMP problem is presented in Section \ref{sec:dp}, while 
in Section \ref{sec:Efficient_TA} we describe the task assignment algorithm used in our approach. Section \ref{sec:swarm_algo} presents the overall solving algorithm.
 Detailed numerical experiments are presented  in Section \ref{sec:simulation}, and
in Section \ref{sec:conclusions} we draw our conclusions and provide directions for future research.

\section{Background Material }
\label{sec:CMDP}
In this section we provide a brief
summary about Markov Decision Processes and  constrained Markov Decision Processes.
We limit our discussion to finite MDPs and we
embrace the notation used in \cite{Altman1996}. For a complete treatment on this subject, the reader is referred to \cite{BertsekasDPVol1,PutermanMDP} and \cite{altman1999constrained} for MDPs and CMDPs, respectively.

\subsection{Total Cost Markov Decision Processes}
A \emph{stationary}, finite MDP is  a quadruple $\mathbf{X},A,c,\mathcal{P}$ where:

\begin{enumerate}
\item $\mathbf{X}$ is a finite set of $n = |\mathbf{X}|$ states. The temporal evolution 
of the state of an MDP is stochastic and the state at time $t$ is 
given by the random variable $X_t$.
\item $A$ is a collection of $n$ finite sets. Each set is
indicated as $A(x)$ and represents the set of actions that can
be applied when the system is in state $x$. It is convenient
to define the set $\mathcal{K} = \{(x,a): x\in \mathbf{X}, a\in A(x)\}$.
$\mathcal{K}$ is the set of allowable state/action pairs.
In general, the action taken at time $t$ is a random variable indicated
by the letter $A_t$.
\item $c: \mathcal{K}\rightarrow \mathbb{R}_{\geq 0}$ is the function defining 
the objective cost incurred when applying action $a \in A(x)$
while in state $x\in \mathbf{X}$. We assume that these costs are non-negative.
\item $\mathcal{P}_{xy}^a$ is the one step transition probability from state $x$
to state $y$ when action $a$ is applied, i.e., $\mathcal{P}_{xy}^a=\Pr[{X_{t+1}=y}|X_t=x,A_t=a]$.
\end{enumerate}
The above model is {\em stationary} because costs and transition
probabilities do not depend on time. Based on the above definitions, the sequence of states and actions
over time constitutes a stochastic process that we denote as $(X_t,A_t)$. Without loss of generality (since the model is stationary), we  assume that the evolution starts at $t=0$.

The optimal control of an MDP entails the determination of a closed-loop 
policy $\pi$ defining which action should be applied   in order to minimize
an aggregate (sum) objective function of the
 objective costs. A policy $\pi$ induces a mass distribution\footnote{Such mass distribution not only exists, but
can be explicitly computed.} over the realizations
of the stochastic process $(X_t,A_t)$.
Let $\Pi_{\text{D}}$ be the set of closed-loop, Markovian, stationary, and \emph{deterministic} policies $\pi: \mathbf{X}\rightarrow A$.
It is well known that for MDPs there is no loss of optimality in restricting the attention to policies in $\Pi_{\text{D}}$ (instead, e.g., of also considering history-dependent or randomized policies). On the other hand, as we will discuss later,
randomization is needed for optimal policies for CMDPs.

Depending on the specific form of the objective function, MDP problems can be categorized into four main classes: finite horizon MDPs, infinite horizon MDPs with discounted cost, average-cost infinite horizon MDPs, and total cost MDPs (or optimal stopping MDPs), where the cost is on an infinite horizon but the state will eventually enter an absorbing set where no additional  costs are incurred. Given their relevance to deployment problems, in this paper we focus on total cost MDPs.
In the literature one finds  three types of total cost MDPs: (i) the 
transient MDPs, for which the total expected time spent in each state is finite under any policy, (ii) the absorbing MDPs, for which the total expected ``life time" 
of the system is finite under any policy, and (iii) contracting MDPs \cite{Altman1996}. One can show that all three types  of total cost MDPs are equivalent under the assumption of a finite state space \cite{Altman1996}. Accordingly, in this paper we focus on \emph{transient total cost MDPs}, with the understanding that our results apply also to the other two types of problems.

Transient total cost MDPs are defined as follows. Consider a partition of $\mathbf{X}$ into sets
$\mathbf{X}'$ and $\mathbf{M}$, with $\mathbf{X} = \mathbf{X}'\cup \mathbf{M}$
and $\mathbf{X}'\cap \mathbf{M} = \emptyset$. 
A policy $\pi$ is said to be transient in $\mathbf{X}'$
if\footnote{$\Pr^\pi_{x_0}[X_t=x]$ is the probability that $X_t=x$ given the initial state
$x_0\in\mathbf X^\prime$ and the policy $\pi$.}
\begin{enumerate}
\item $\sum_{t=0}^\infty \Pr^\pi_{x_0}[X_t=x]< \infty$ for every $x\in \mathbf{X}'$, and
\item  $\mathcal{P}_{yx}^a=0$ for each $y\in \mathbf{M}$, $x\in \mathbf{X'}$, and $a\in A(y)$.
\end{enumerate}
In order words, the transient policy $\pi$ ensures that, \emph{eventually}, the state will enter the \emph{absorbing} set $\mathbf M$.  The second constraint on the transition probabilities
implies that once the state enters $\mathbf{M}$ it will remain there. An MDP for which all
policies are transient in  $\mathbf{X}'$ is called a $\mathbf{X}'$-transient MDP.

We study in this paper the total cost criterion for $\mathbf{X}'$-transient MDPs. We assume throughout the paper that $c(x, a) = 0$  for any $x \in \mathbf{M}$, and that the initial state $x_0 \notin \mathbf{M}$. 
We define  $\sum_{t=0}^{\infty}E_{\pi} [c(X_t,A_t)]$
 as the total expected cost until the set $\mathbf{M}$ is reached 
 \cite{Altman1996},  where the subscript $\pi$ means that the expectation is
with respect to the mass probability induced by the policy $\pi$.
Note that this definition is well posed because
  the $\mathbf{X}'$-transient MDP 
 assumption ensures that the  expected total cost function exists and is bounded.
A transient total cost MDP problem is then defined as follows:
\begin{quote} {\bf Transient total cost MDP} --- Given a $\mathbf{X}'$-transient MDP, determine
a policy $\pi^*$ minimizing the total expected cost, i.e., find  
\[
\pi^* \in \arg \min_{\pi \in \Pi_{\text{D}}} \, \sum_{t=0}^{\infty} \, E_{\pi} [c(X_t,A_t)].
\]
\end{quote}


Optimal policies can be computed in different ways, but, 
in practice, dynamic programming methods (value iteration or policy
iteration) are the  techniques most commonly used.

\subsection{Total Cost Constrained Markov Decision Processes}
\label{subsection:CMDP}
A Constrained Markov Decision Process (CMDP) extends the MDP 
model by introducing additional costs and associated constraints. 
A CMDP is defined by  $\mathbf{X},A,c,\mathcal{P}, \{d_i\}_{i=1}^L$,$\{D_i\}_{i=1}^L$
where $\mathbf{X},A,c,\mathcal{P}$ are the same as above and
furthermore:
\begin{enumerate}
\item $d_i: \mathcal{K}\rightarrow \mathbb{R}_{\geq 0}$, with $1 \leq i \leq L$, is a family of 
$L$ constraint costs incurred when applying action $a\in A(x)$ from state
$x$. We assume that these costs are non-negative.
\item $D_i \in \mathbb{R}_{\geq 0}$ is an upper bound  for the expected cumulative (through time)
 $d_i$  costs. 
\end{enumerate}
Informally, solving a CMDP means determining a policy $\pi$ minimizing the expected objective cost
defined by $c$ while  ensuring that each of the constraint costs defined
by the functions $d_i$ are (in expectation) bounded by $D_i$.
This notion can be formalized as follows. First, it is necessary to
highlight that for CMDPs
an optimal policy in general depends on the probabilistic
distribution of the initial state. In the following this 
distribution is indicated with the letter $\beta$, with the
understanding that $\beta(x) = \Pr[X_0 = x]$ for $x\in \mathbf{X}$. 

The same conditions outlined in the previous subsection to define a transient total cost MDP can be 
used to define a transient total cost CMDP. 
A CMDP for which all policies are transient in  $\mathbf{X}'$ is called a $\mathbf{X}'$-transient CMDP.
As for the total cost MDP problem, we assume that $c(x, a) = 0$, but for the CMDP problem we further assume  that $d_i(x, a) = 0$, $i\in\{1,\ldots,L\}$, 
for any $x \in \mathbf{M}$, and that  $\beta(x)=0$ for all $x \in \mathbf{M}$.

The total expected cost for an $\mathbf{X}'$-transient CMDP is defined as $\sum_{t=0}^\infty E_{\pi,\beta}\left[c(X_t,A_t)\right]$ 
where the expectation is taken with respect to the mass distribution induced by $\pi$ and the initial distribution $\beta$.
Similarly, we can define the total expected constraint costs as $E_{\pi,\beta}\left[d_i(X_t,A_t)\right]$.
Note that because of the assumptions made about the costs in $\mathbf{M}$ and because, by assumption, the CMDP 
is $\mathbf{X}'$-transient, these expectations exist and are finite.

Henceforth, we will use the following notation:
$c(\pi,\beta) := \sum_{t=0}^\infty E_{\pi,\beta}\left [c(X_t,A_t)\right]$,
and $d_{i}(\pi,\beta) :=\sum_{t=0}^\infty E_{\pi,\beta}\left[d_i(X_t,A_t)\right],\,\, 1 \leq i \leq L$. A transient total cost CMDP problem is then defined as follows:
\begin{quote} {\bf Transient total cost CMDP} --- Given a $\mathbf{X}'$-transient MDP, determine a policy $\pi$
minimizing the total expected cost and satisfying the $L$ constraints on the total expected constraint costs, i.e., find
\begin{align}
&\pi^* \in \arg \min_{\pi \in \Pi_{\text{M}}} c(\pi,\beta) \label{COP} \\
&\textrm{s.t.}~d_{i}(\pi,\beta)\leq D_i,~ ~1\leq i\leq L,\nonumber
\end{align}
where $\Pi_{\text{M}}$ is the set of closed-loop, Markovian, stationary, and \emph{randomized} policies (i.e., mapping a state $x$ into a probability mass function over $A(x)$). 
\end{quote}
It is well known (see, e.g., Theorem 2.1 in \cite{Altman1996} and Theorem 6.2 in \cite{altman1999constrained}) that there is no loss of optimality in restricting the attention to policies in $\Pi_{\text{M}}$ (instead, e.g., of also considering history-dependent policies). 
 However, one should not restrict $\pi$ to be in $\Pi_D$, as for the MDP case, because
an optimal policy
might require randomization.
CMDPs do not share many of the properties enjoyed by MDPs 
(see \cite{altman1999constrained} for a comprehensive discussion of the subject.)
For example, CMDPs cannot be solved using dynamic programming but can be 
solved, for example, using a linear programming formulation presented below\footnote{More in general, there exist more than one linear programming formulation that can be used, and methods based on
Lagrange multipliers have been introduced as well. However, they will not be 
considered in this paper.}.

A fundamental theorem concerning CMDPs \cite{Altman1996} relates the solution of
the optimization problem defined in equation \eqref{COP} to the following  linear optimization problem. 
Let $\mathcal{K}' := \{(x,a), x\in \mathbf{X}', a\in A(x)\}$ and 
consider $|\mathcal{K}'|$ optimization variables $\rho(x,a)$, each one associated
with an element in $\mathcal{K}'$. Let $\delta_x(y)=1$ when $x=y$ and 0 otherwise, and define the linear optimization problem:

\begin{align}
\min_{\rho} &\sum_{(x,a) \in\mathcal{K}'} \rho(x,a)c(x,a)\label{LINCOP} \\
\textrm{s.t.}&\sum_{(x,a) \in\mathcal{K}'} \rho(x,a)d_i(x,a)\leq D_i~ ~1\leq i\leq L\nonumber\\
&~ \sum_{y\in \mathbf{X}'} \sum_{a\in A(y)}\rho(y,a)(\delta_x(y)-\mathcal{P}_{yx}^a)=\beta(x)~\forall x\in\mathbf{X}'\nonumber\\ 
&~\rho(x,a)\geq 0 \quad \forall (x,a) \in \mathcal{K}'.\nonumber 
\end{align}
The optimization problem defined in equation  \eqref{COP} has a solution if and only if 
the problem defined in equation \eqref{LINCOP} is feasible \cite{Altman1996}, and the optimal solution to the linear program
induces an optimal, stationary, randomized policy for the CMDP defined as follows:
\begin{equation}
\pi^*(x,a) = \frac{\rho(x,a)}{\sum_{a\in A(x)} \rho(x,a)}~ ~ x\in \mathbf{X}',a\in A(x), \label{eq:induced}
\end{equation}
where $\pi^*(x,a)$ is the probability of taking action $a$ when in state $x$. If the denominator 
in equation (\ref{eq:induced}) is zero, then the policy can be arbitrarily defined for $(x,a)$. 
Note that equation \eqref{eq:induced} does not specify a policy 
 for states in $\mathbf{M}$. Since no further costs are incurred in 
$\mathbf{M}$ and the state cannot leave $\mathbf{M}$ once it enters it, a policy can be arbitrarily defined for those states. 
The optimization variables $\rho(x,a)$ are referred to as {\em occupation measures} \cite{Altman1996}, since for each pair $(x,a)$,
they can be written as:
\begin{equation}
\rho(x,a)=\sum_{t=0}^\infty \Pr[X_t=x,A_t=a],
\label{eq:rhodef}
\end{equation}
where the probability is implicitly conditioned on a policy $\pi$ and an initial distribution $\beta$.
Note that an occupation measure is a sum of probabilities, but in general is \emph{not} a probability itself.

\subsection{Robust Constrained Markov Decision Processes}
\label{sec:RCMDP}

In this section we introduce a natural generalization of the CMDP problem to the case where there is uncertainty in the model's parameters (this uncertainty should not be confused with the probabilistic uncertainty affecting the outcome of a control action). The standing assumption is that the underlying MDP is transient. 
Specifically, we consider the scenario where the stage-wise constraint costs are affected by \emph{additive} uncertainty in the form:  
\[
d_\epsilon(X_t, A_t):=d(X_t, A_t)+\epsilon(X_t, A_t),\,\,\forall (X_t, A_t) \in \mathcal{K}',
\] 
where $d(X_t, A_t)$ represents the \emph{nominal} stage-wise constraint cost, and $\epsilon(X_t, A_t)$ is the uncertain term. Thus, for a \emph{given} sequence of state-action pairs $\{X_t, A_t\}_t$, the \emph{uncertain} constraint cost is defined as: $d_{\epsilon}(\pi,\beta) := \sum_{t=0}^{\infty} \, E_{\pi,\beta} \, [d(X_t,A_t) +\epsilon(X_t, A_t)]$.
 
Note that, according to our definition, the uncertainty is stage-invariant (i.e., it does not change from stage to stage, but of course it is a function of $X_t$ and $A_t$). In practice it is often difficult to give a stochastic characterization of such uncertainty. On the other hand,  in many cases it is still possible to characterize the support of the uncertainties.  Specifically, we let 
$\mathcal U \subset \reals^{| \mathcal{K}'|}$ denote the set of admissible values for the uncertainty. In this paper we restrict our attention to \emph{budgeted interval uncertainty}:
\begin{assumption}[Budgeted Interval Uncertainty]\label{def:uncert}
Let $\{\overline{\epsilon}(x,a)\}_{(x,a)\in \mathcal{K}'}$ be a given non-negative vector in $\reals^{\mathcal{K}'}_{\geq 0}$.  Then the uncertainty set is given by 
\[
\mathcal U = \left\{\epsilon\in\reals^{|\mathcal{K}'|}_{\geq 0}: \begin{array}{l}
0 \leq \epsilon(x,a)\leq \overline{\epsilon}(x,a),\forall (x,a) \in  \mathcal{K}'\\
\sum_{(x,a)\in\mathcal K^\prime} \epsilon(x,a)\leq \Gamma \end{array} \right\},
\]
where $\Gamma$, $0\leq \Gamma \leq \sum_{(x,a)\in\mathcal K^\prime} \overline{\epsilon}(x,a)$, is a given ``uncertainty budget."
\end{assumption}
Note that the constraint costs remain non-negative under any perturbation in $\mathcal U$, hence the problem is well-posed under any realization of the underlying uncertainties, see Section \ref{subsection:CMDP}. Within the Budgeted Interval Uncertainty model, $\epsilon(x,a)$ represents the deviation from the nominal cost coefficient for a given $(x,a)  \in  \mathcal{K}'$. In turn, $\Gamma$ has the interpretation of a \emph{budget of uncertainty} that a
system designer selects in order to easily trade robustness and performance. As an alternative interpretation, it is unlikely that all of the $d(x,a)$, $(x,a)  \in  \mathcal{K}'$, will deviate up to their worst case values, and the goal is to protect the system up to an uncertainty magnitude equal to $\Gamma$. Note a value of $\Gamma=0$ implies that no uncertainty is expected, while a value $\Gamma= \sum_{(x,a)\in\mathcal K^\prime} \overline{\epsilon}(x,a)$ implies that the system designer expects that all parameters will deviate up to their maximum values (in which case the problem could be rewritten as a problem without uncertainty and with modified constraint stage costs $d(x,a) + \overline{\epsilon}(x,a)$). This model of uncertainty is fairly common in robust optimization, we refer the reader to \cite{ben2009robust, DB-MS:03}. Extensions to alternative uncertainty models are possible (e.g., multiplicative uncertainty), and are left for future research.

Consider, then, the \emph{robust} constraint
 \begin{equation}\label{eq_rob_con}
 \sup_{\epsilon\in \mathcal U} \, d_{\epsilon}(\pi,\beta) \leq D.
 \end{equation}
The next lemma (whose proof is provided in the Appendix) shows that we can replace the $\sup$ operator with the $\max$ operator.
\begin{lemma}\label{lemma:sup}
For any policy $\pi$ and initial distribution $\beta$, the supremum in the optimization problem 
$ \sup_{\epsilon\in \mathcal U} \, d_{\epsilon}(\pi,\beta)$
is achieved. Hence one has  
$\sup_{\epsilon\in \mathcal U} \, d_{\epsilon}(\pi,\beta) = \max_{\epsilon\in \mathcal U} \, d_{\epsilon}(\pi,\beta)$.
\end{lemma}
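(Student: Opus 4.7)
The plan is to rewrite $d_{\epsilon}(\pi,\beta)$ as a linear (hence continuous) function of $\epsilon$ over the feasible set $\mathcal{U}$, then observe that $\mathcal{U}$ is a compact subset of $\reals^{|\mathcal{K}'|}$ so Weierstrass's extreme value theorem applies and the supremum is attained.

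First, I would unfold the expectation stage by stage. Since all quantities are non-negative, Tonelli's theorem justifies interchanging the infinite sum over $t$ with the expectation, giving
\begin{equation*}
d_{\epsilon}(\pi,\beta) = \sum_{t=0}^{\infty} \sum_{(x,a)\in\mathcal{K}'} \Pr^{\pi,\beta}[X_t=x, A_t=a]\,\bigl(d(x,a)+\epsilon(x,a)\bigr).
\end{equation*}
Recognizing the occupation measure $\rho(x,a)=\sum_{t=0}^{\infty}\Pr^{\pi,\beta}[X_t=x,A_t=a]$ introduced in equation \eqref{eq:rhodef}, and invoking the $\mathbf{X}'$-transient hypothesis to guarantee that each $\rho(x,a)$ is finite, one obtains
\begin{equation*}
d_{\epsilon}(\pi,\beta) = \sum_{(x,a)\in\mathcal{K}'}\rho(x,a)\,d(x,a) + \sum_{(x,a)\in\mathcal{K}'}\rho(x,a)\,\epsilon(x,a).
\end{equation*}
The first term does not depend on $\epsilon$; the second is a linear form in $\epsilon$ with finite non-negative coefficients.

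Next, I would verify that $\mathcal{U}$ is compact. By Assumption \ref{def:uncert}, $\mathcal{U}$ is defined as the intersection of finitely many closed half-spaces in the finite-dimensional Euclidean space $\reals^{|\mathcal{K}'|}$, hence closed; and the box constraints $0\leq \epsilon(x,a)\leq \overline{\epsilon}(x,a)$ make it bounded. By the Heine--Borel theorem, $\mathcal{U}$ is compact. The linear map $\epsilon\mapsto \sum_{(x,a)}\rho(x,a)\epsilon(x,a)$ is continuous, so the constant-shifted map $\epsilon\mapsto d_{\epsilon}(\pi,\beta)$ is continuous as well.

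Finally, applying Weierstrass's extreme value theorem to this continuous function on the compact set $\mathcal{U}$ yields the existence of a maximizer $\epsilon^{\star}\in\mathcal{U}$, and therefore $\sup_{\epsilon\in\mathcal{U}} d_{\epsilon}(\pi,\beta) = \max_{\epsilon\in\mathcal{U}} d_{\epsilon}(\pi,\beta)$. There is no real obstacle in this proof; the only care needed is to justify the interchange of the infinite summation and expectation, which is handled cleanly by non-negativity via Tonelli together with the transient-MDP assumption ensuring finiteness of the occupation measures.
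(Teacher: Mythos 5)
Your proof is correct and follows essentially the same route as the paper's: both decompose $d_{\epsilon}(\pi,\beta)$ into a constant term plus a term linear in $\epsilon$, observe that $\mathcal{U}$ is closed and bounded in $\reals^{|\mathcal{K}'|}$, and invoke the extreme value theorem. The only cosmetic difference is that you make the linear form explicit via the occupation measures $\rho(x,a)$ (justifying the interchange with Tonelli), while the paper verifies additivity and homogeneity of the functional $\epsilon \mapsto \sum_{t} E_{\pi,\beta}[\epsilon(X_t,A_t)]$ abstractly and uses the fact that linear functionals on finite-dimensional spaces are continuous.
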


In light of equation \eqref{eq_rob_con} and Lemma \ref{lemma:sup}, we can therefore  formulate the following  Robust Constrained Markov Decision Problem (RCMDP):
\begin{quote} {\bf Optimization problem RCMDP} --- Given a $\mathbf{X}'$-transient CMDP, an initial distribution $\beta$, and a contraint threshold $D$, find
\begin{align}
&\pi^* \in \arg \min_{\pi \in \Pi_{\text{M}}}  c(\pi,\beta) \label{RCMDP} \\
&\textrm{s.t.}~\max_{\epsilon\in \mathcal U} \, d_{\epsilon}(\pi,\beta) \leq D,
\end{align}
over the class of policies $\pi \in \Pi_{\text{M}}$.
\end{quote}

We next show how the  RCMDP problem can be solved as a linear optimization problem on the space of occupation measures. Consider the following optimization problem:
\begin{quote} {\bf Optimization problem $\mathcal{OPT}$} --- For a given initial distribution $\beta$ and risk threshold $D$, solve\begin{alignat*}{2}
\min_{\rho}  & & \quad&  \sum_{(x,a) \in \mathcal{K}'}\rho(x,a)c(x, a)\\  
\text{s.t.} & & \quad&\sum_{y\in \mathbf{X'}}\sum_{a\in A(y)}\rho(y,a)\left[\delta_x(y)- \mathcal{P}_{yx}^a\right]= \beta(x), \forall x\in \mathbf{X'}\\
& &\quad & \max_{\epsilon\in \mathcal U} \sum_{ (x,a) \in \mathcal{K}'}\rho(x,a)(d(x, a)+\epsilon(x,a)) \leq D\\  
& &\quad &\rho(x,a)\geq 0, \quad \forall (x,a) \in \mathcal{K}'.
\end{alignat*}
\end{quote}

Note that problem $\mathcal{OPT}$ is a \emph{robust} linear programming problem. As for non-robust CMDPs, $\rho(x,u)$ has the meaning of occupation measure. The following theorem (whose proof is given in the Appendix) establishes  the relation between RCMDP and $\mathcal{OPT}$. 
\begin{theorem}\label{thm:OM}
The RCMDP problem has a solution if and only if problem $\mathcal{OPT}$
 is feasible. The optimal solution to the robust linear program $\mathcal{OPT}$
induces an optimal stationary, randomized policy for RCMDP defined as follows:
\begin{equation}
\pi^*(x,a) = \frac{\rho^*(x,a)}{\sum_{a\in A(x)} \rho^*(x,a)},~ ~ x\in \mathbf{X}',a\in A(x), \label{eq:rob_induced}
\end{equation}
where $\{\rho^*(x,a)\}_{(x,a)\in  \mathcal{K}'}$ is the optimal solution to problem $\mathcal{OPT}$. For states $x\in\mathbf X$ such that $\sum_{a\in A(x)} \rho^*(x,a)=0$, $\pi^*(x,a)$ is arbitrarily chosen, for every $a\in A(x)$.
\end{theorem}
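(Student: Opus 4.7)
The plan is to mirror the classical occupation measure argument for CMDPs (equations \eqref{COP}--\eqref{LINCOP}), and show that the robust constraint translates cleanly because the uncertain term enters linearly in $\epsilon$. Concretely, I would establish a bijection (up to arbitrariness on states never visited) between the feasible policies of RCMDP and the feasible vectors $\rho$ of $\mathcal{OPT}$, such that the objective and the robust constraint agree on corresponding pairs. Once this is in place, existence of a minimizer and the formula \eqref{eq:rob_induced} follow from the classical result in \cite{Altman1996} applied to the \emph{non-robust} linear program obtained after the $\max_{\epsilon}$ is evaluated.

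For the first direction, I would recall that for $\mathbf{X}'$-transient MDPs, any stationary randomized policy $\pi\in\Pi_{\text{M}}$ together with the initial distribution $\beta$ defines a finite occupation measure $\rho_\pi(x,a):=\sum_{t=0}^{\infty}\Pr^{\pi}_\beta[X_t=x,A_t=a]$ for $(x,a)\in\mathcal{K}'$, and this $\rho_\pi$ satisfies both the non-negativity condition and the flow balance equation $\sum_{y\in\mathbf{X}'}\sum_{a\in A(y)}\rho_\pi(y,a)[\delta_x(y)-\mathcal{P}_{yx}^a]=\beta(x)$ for every $x\in\mathbf{X}'$. Conversely, given a non-negative $\rho$ satisfying flow balance, the policy defined by \eqref{eq:rob_induced} (with arbitrary choice on states where the denominator vanishes) is stationary, randomized, and its induced occupation measure coincides with $\rho$. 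This is Theorem 2.1 of \cite{Altman1996}, which I would invoke directly rather than re-prove.

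The key computation is that, under this correspondence, both the objective and the uncertain constraint cost are linear functionals of $\rho$: by linearity of expectation and Fubini (justified by the transient assumption ensuring absolute summability),
\begin{equation*}
c(\pi,\beta)=\sum_{(x,a)\in\mathcal{K}'}\rho_\pi(x,a)\,c(x,a),\quad d_{\epsilon}(\pi,\beta)=\sum_{(x,a)\in\mathcal{K}'}\rho_\pi(x,a)\bigl(d(x,a)+\epsilon(x,a)\bigr),
\end{equation*}
for every $\epsilon\in\mathcal U$. Taking the supremum (a maximum, by Lemma \ref{lemma:sup}) over $\epsilon\in\mathcal U$ on both sides gives $\max_{\epsilon\in\mathcal U}d_{\epsilon}(\pi,\beta)=\max_{\epsilon\in\mathcal U}\sum_{(x,a)\in\mathcal{K}'}\rho_\pi(x,a)(d(x,a)+\epsilon(x,a))$, so the robust constraint of RCMDP and the robust constraint of $\mathcal{OPT}$ define the same feasible set after the change of variables $\pi\leftrightarrow\rho$.

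Putting the pieces together: feasibility of RCMDP is equivalent to feasibility of $\mathcal{OPT}$, and since the objectives match on corresponding $(\pi,\rho)$ pairs, every optimizer of $\mathcal{OPT}$ yields via \eqref{eq:rob_induced} a policy achieving the RCMDP optimum, and vice versa. I expect the only subtle step is justifying the interchange of sum and expectation in the occupation-measure identity and verifying that the classical Altman bijection remains valid in the transient setting with the assumption $c,d,\beta$ vanishing on $\mathbf{M}$; robustness itself does not add any real difficulty, since it only requires passing the $\max_\epsilon$ through the already-verified linear relation.
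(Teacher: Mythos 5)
Your proof is correct, but it takes a genuinely different route from the paper's. The paper does not work directly with the policy--occupation-measure correspondence under the robust constraint; instead it first observes that $\mathcal U$ is a bounded polytope (intersection of a hyper-rectangle and a simplex), so the linear functional $\psi^{\pi,\beta}(\epsilon)$ attains its maximum at one of the finitely many vertices $\epsilon_j\in\mathcal V$. This lets the authors replace the single robust constraint by $|\mathcal V|$ ordinary linear constraints, turning RCMDP into a standard CMDP with finitely many constraints to which Theorem 8.1 of \cite{Altman1996} applies verbatim; they then fold the $|\mathcal V|$ constraints back into the single $\max_{\epsilon\in\mathcal U}$ form to recover $\mathcal{OPT}$. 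You instead invoke the underlying bijection between stationary randomized policies and the flow-balance polytope of occupation measures, verify that $c(\pi,\beta)$ and $d_\epsilon(\pi,\beta)$ are the corresponding linear functionals of $\rho_\pi$ for each fixed $\epsilon$, and pass $\max_\epsilon$ through that identity. Your route is more direct and avoids the vertex enumeration entirely, but it requires unpacking the correspondence itself (the surjectivity of $\pi\mapsto\rho_\pi$ onto the LP-feasible set and the cost identities) rather than citing the packaged ``CMDP $\equiv$ LP'' theorem for $L$ constraints; the paper's route buys a black-box application of the classical multi-constraint result at the cost of an intermediate reformulation. Your citation of ``Theorem 2.1'' for the bijection is slightly off (in the paper that result is cited for the sufficiency of Markovian stationary policies; the occupation-measure equivalence is Theorem 8.1), but this is a labeling issue, not a mathematical gap. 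Both arguments correctly rely on Lemma \ref{lemma:sup} to replace $\sup$ by $\max$, and both reach the same conclusion.
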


\section{The Deployment Problem as a RCMDP}
\label{sec:formulation}
In this section we formalize the rapid deployment problem we wish to study and we show
how it can be modeled as an instance of a RCMDP. 

\subsection{General Description}\label{subsec:gendes}
At a high level, the problem is as follows. A set of robots  are placed within a bounded environment where a set of points represent target locations. 
The objective is to  deploy the robots  so that each location is 
reached by at least one robot and the deployment task takes no longer than a given temporal deadline. However, the environment is stochastic, in the sense that robots might ``probabilistically" fail. In such a stochastic setup, the objective is then to maximize the \emph{probability} that each location is 
reached by at least one robot while ensuring that the \emph{expected} duration of the deployment task is upper bounded by a given temporal deadline.  In this paper we interpret the \emph{duration} of the deployment task as the elapsed time between the common instant when robots start moving and the instant when the \emph{last} robot stops moving, either because of a failure \emph{or} because a target has been reached.

Specifically, following the approach presented in \cite{CarpinICRA2013}, the environment is abstracted into an undirected graph $G=(X,E)$
where $X$ is the set of vertices and $E$ is the set of edges. (The vertex set $X$ will be later mapped into the state space $\mathbf{X}$ in the ensuing RCMDP model.) An edge $e\in E$  between two vertices $v_1$ and $v_2$ means that a robot can move from $v_1$ to $v_2$ and vice versa. We consider that there are $K$ robots, initially located at a single vertex $v_0\in X$, that are required to reach a set of target vertices, denoted as $T\subset X$. Self-loops in $G$ are only allowed for vertices in the target set, i.e., $(v,v)\in E$ if and only if $v\in T$. The deployment task is considered successful if (i) each vertex in  $T$ is reached by at least one robot, and (ii) this is accomplished within a given temporal deadline $D$.

To capture the time/safety trade-off, we associate to each edge $e_i \in E$, $i\in|E|$, a \emph{safety function} $S_e: \mathbb{R}_{\geq 0}\rightarrow [0,1]$. Function $S_e(t)$, for each $t$, represents  the probability of successfully traversing an edge $e$ given that the traversal time is equal to $t$. The safety function satisfies the boundary constraints $S_e(0)=0$ and $\lim_{t\rightarrow +\infty} S_e(t) = 1$, and must be non-decreasing. Other than those constraints, the safety function is arbitrary (and potentially different for each edge). An example is provided in Figure \ref{fig:sfunction}, where the probability of success is modeled using a sigmoidal function. There are many other success probability models available in the survival analysis community. For example, in \cite{rausand2004system}, the hazard probability functions are modeled either with Exponential, Gamma and Weibull distributions or with other non-parametric estimates. However the advantages of modeling the success probability with a sigmoidal/logit function are, (i) this function serves as a good model for binary classification, and (ii) the model parameters can be easily estimated by logistic regression, after experimental data is given. 


Accordingly, the deployment problem entails finding control policies for each robot that prescribe at each vertex (i) which vertex to visit next, and (ii) at what speed the edge should be traversed. Robots should move slowly to increase their chances of successfully traversing the edges of the graph, however they can not arbitrarily slow down because they will otherwise miss the temporal deadline. In this paper we consider robotic swarms, hence we seek \emph{minimalistic} control strategies that do not involve any communication after the deployment process is initiated.
\begin{figure}[tbh]
\centering
\includegraphics[width=0.6\linewidth]{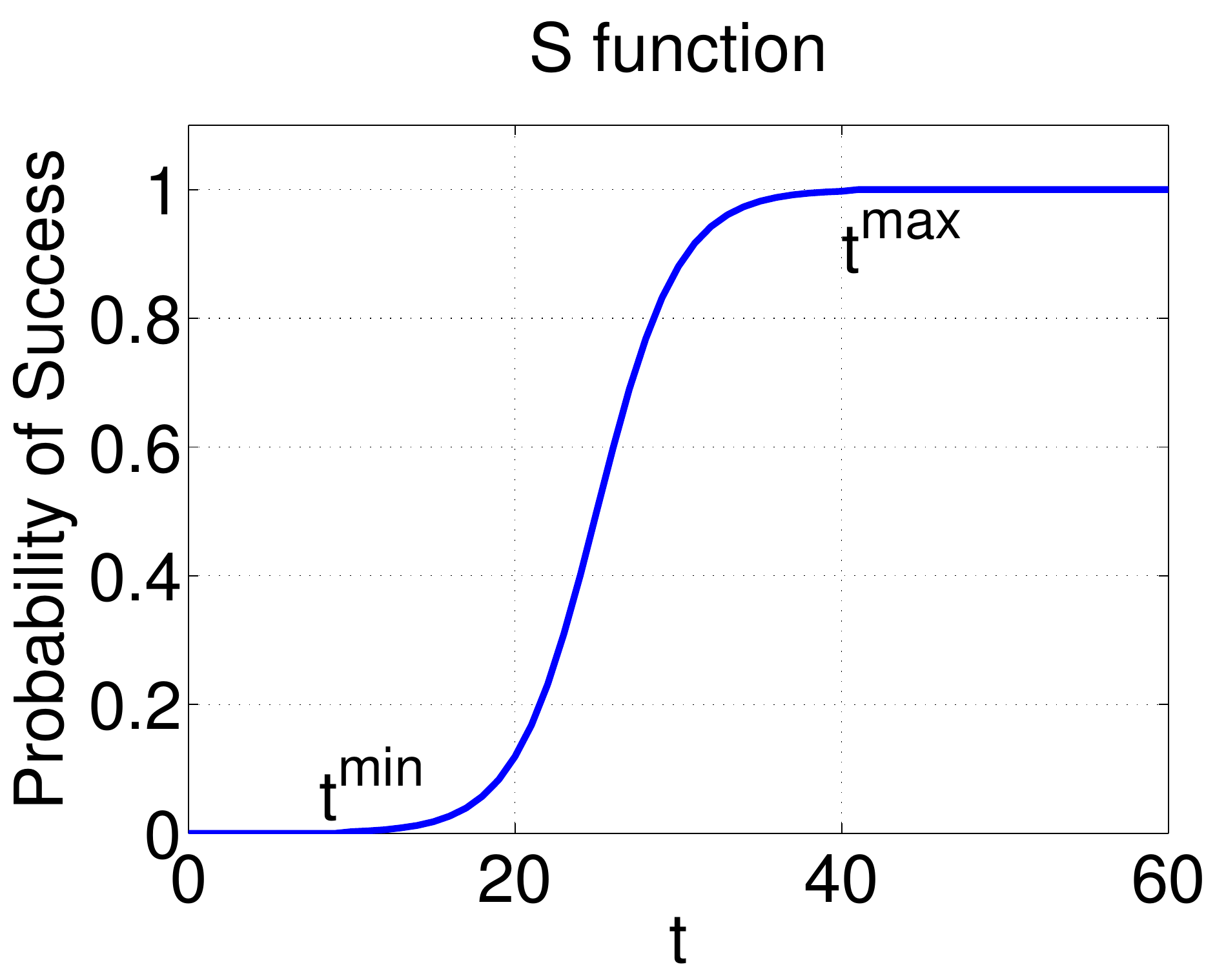}
\caption{A sigmoidal shape for the safety  function $S_e$ associated with the edges in the graph.}
\label{fig:sfunction}
\end{figure}
\subsection{Formulation as RCMDP --- Single-Robot}\label{subsec:singlerobot}
We now show how to cast the aforementioned deployment problem within the RCMDP model. 

Consider, first, the single-robot case and, hence, a target set that comprises a single vertex (i.e., $|T|=1$). The state space comprises all the vertices of the graph $G$ plus a virtual \emph{sink state} $\mathcal{S}$ modeling failures, i.e., the robot enters the sink state when it experiences an irremediable  failure while traversing an edge and then stops functioning. The state space is then  $\mathbf{X}:=X\cup \{\mathcal{S} \}$.
The initial distribution $\beta$ is defined as $\beta(v_0)=1$ and $\beta(x)=0$ for $x\in X, x \neq v_0$, since, in our model, $v_0$ is (deterministically) the initial location of the robot. At each vertex $y\in X$, the robot needs to decide (i) what vertex to visit next, and (ii) the traversal time (or, equivalently, the traversal speed), within given bounds (dictated by velocity bounds). The traversal time is a continuous variable, which is discretized with a time step $\Delta>0$ to make the model amenable to dynamic optimization. (For simplicity, we assume that the discretization step $\Delta$ is equal for all stages.) Summarizing, the action set at a vertex $y\in X, y\notin T$, is given by:

\begin{equation*}
\begin{split}
A(y) = \Bigl \{ (x,t)\in X\times \reals_{\geq 0}: &(y,x)\in E,   \, \, t = t^{\mathrm{min}}_{yx} + k\, \Delta,\\ k\in \naturals, \, \, 
&0\leq k\leq \Bigl \lfloor \frac{t^{\mathrm{max}}_{yx}- t^{\mathrm{min}}_{yx}}{\Delta}\Bigr \rfloor \Bigr \},
\end{split}
\end{equation*}
where $t^{\mathrm{min}}_{yx}\geq 0$ and $t^{\mathrm{max}}_{yx}>0$ are the minimum and maximum, respectively, traversal times for the edge $(y, x)\in E$ (see Figure \ref{fig:sfunction}). An action $(x,t) \in A(y)$ means that the robot decides to navigate from $y$ to $x$ spending
 time $t$. For the target vertex, i.e., $y\in T$, we consider a single action:
 \begin{equation*}
\begin{split}
A(y) = \Bigl \{ (x,t)\in X\times \reals_{\geq 0}: x = y, \, \,  t = t_{yy}\Bigr \},
\end{split}
\end{equation*}
 where $t_{yy}>0$ models the self-loop time (the choice of this value is immaterial, since $T$ will be the absorbing set in the RCMDP model). For the special sink state $\mathcal{S}$ we define just one action, denoted as $a_{\mathcal{S}}$, which lets the state of the robot transition to the unique vertex in the target set $T$ (this choice is made to ensure that $T$ is the absorbing set in the ensuing RCMDP model). 
 
The above action sets and the fact that the traversal of each edge in $E$ can only be accomplished probabilistically (according to the safety function $S_e$) induce the following transition probabilities on $\mathbf{X}$:

\[
\mathcal{P}_{yx}^a =
\begin{cases}
S_{(y,x)}(t) & \textrm{if } y\neq \mathcal{S}, \, a=(x,t) \in A(y), \, x\neq \mathcal{S}, \\ 
1-S_{(y,x)}(t) & \textrm{if } y\neq \mathcal{S}, \, a=(x,t)\in A(y), \, x= \mathcal{S},\\
1 & \textrm{if } y= \mathcal{S}, \, a=a_{\mathcal{S}}, \, x\in T.\\ 
\end{cases}
\]
All other transition probabilities are equal to zero. The first case models the probability of successfully completing a traversal from $y$ to $x$, where the success probability is dictated by the safety function $S_{(y,x)}$. The second case
models the failure probability, i.e., when a robot does not 
successfully perform the traversal from $y$ to $x$ and, hence,  it enters the sink state $\mathcal{S}$. The third case enforces the deterministic transition from a sink state to the target vertex (which, as discussed before, ensures the $T$ is the absorbing set). It is immediate to conclude that, with these transition probabilities, any Markovian, stationary, and randomized policy on $\mathbf{X}$ is transient on the set $\{\mathbf{X}\setminus T\}:=\mathbf{X^\prime}$; the corresponding absorbing set is $\mathbf{M}:=T$. Figure \ref{fig:SinkState} illustrates the relationship between $\mathbf{X}$,
$\mathcal{S}$, and $T$.

\begin{figure}[tbh]
\centering
\includegraphics[width=0.7\linewidth]{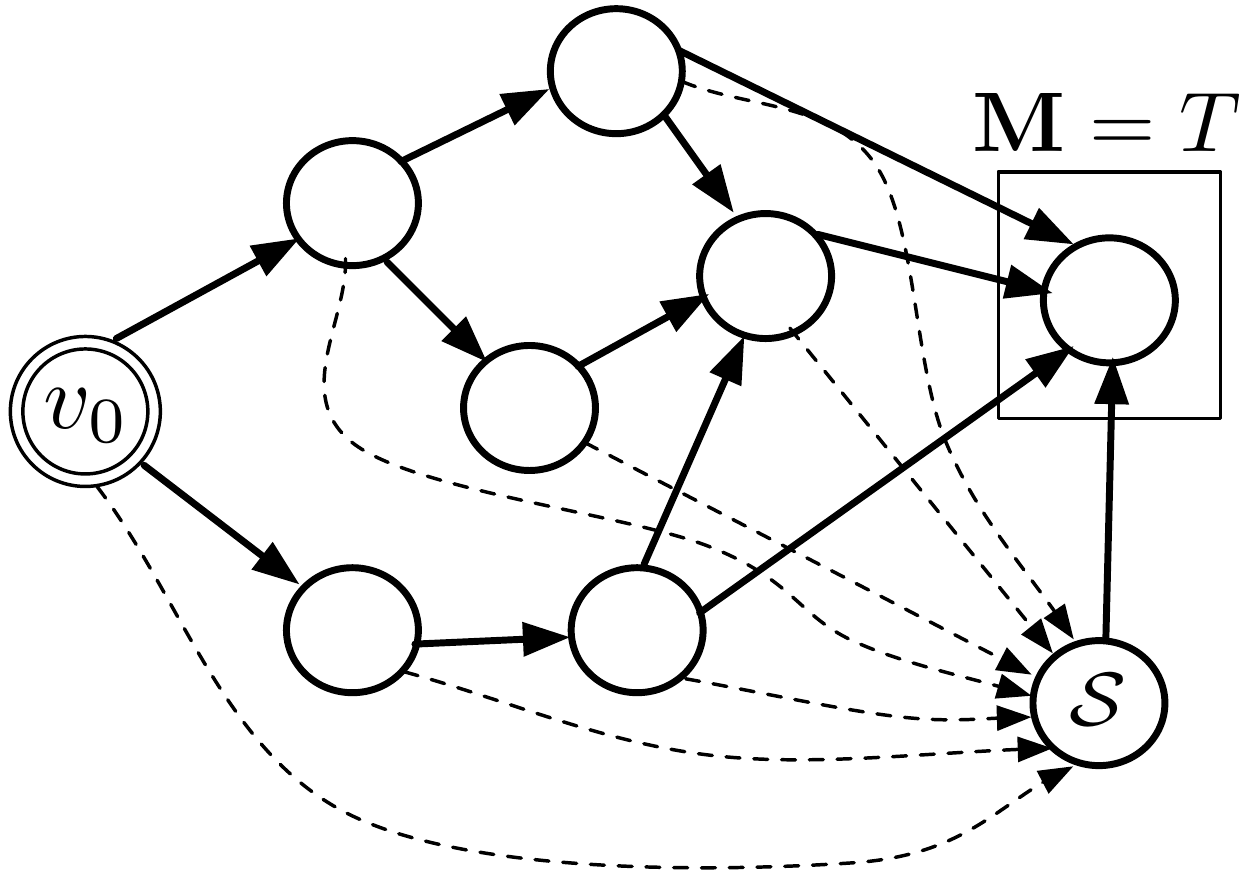}
\caption{Given a graph $G=(X,E)$ and a policy $\pi$, multiple stochastic
paths from the deployment vertex $v_0$ to the target vertex set $T$
 exist. Whenever a failure occurs, the state  enters $\mathcal{S}$ (dashed arrows). States outside the
box labeled $\mathbf{M}$ are in $\mathbf{X}'$.
}
\label{fig:SinkState}
\end{figure}

To complete the definition of the RCMDP model,  we need to define the objective costs $c(x,a)$ and
the constraint costs $d(x,a)$. 
Given that our stated objective is to maximize the probability of success (or equivalently, minimize the probability of failure), the objective costs are defined as follows:
\[
c(y,a) =
\begin{cases}
0& \textrm{if } y\neq \mathcal{S}, a=(x,t) \in A(y), \\
1& \textrm{if } y = \mathcal{S}, \, a=a_{\mathcal{S}}.
\end{cases}
\]
Note that, with this choice, $\sum_{t=0}^\infty E_{\pi,\beta}\left [c(X_t,A_t)\right]$ is exactly the probability that the robot fails to reach the target. To see this, denote the sample space by $\Omega$, define the events $B_t=\{X_t = \mathcal S \}$, for $t=0,1,\ldots$, and let $B:= \Omega\setminus  \cup_{0}^{\infty}\, B_t$. Clearly, the events $\{B_t\}_{t=0}^{\infty}$ and $B$ are collectively exhaustive, i.e., their union yields $\Omega$. Also,  there is only one action available when the system is in 
 $\mathcal{S}$, and such action deterministically moves the state to the absorbing set $\mathbf{M}$ (where the state remains trapped). Hence, the events $\{B_t\}_{t=0}^{\infty}$ and, of course, $B$, are mutually exclusive. We can then apply the law of total probability and write (where we omit the subscripts $\pi$ and $\beta$ for simplicity)
\begin{equation}\label{eq:fail_prob}
\begin{split}
\Pr[\mathrm{Failure}] &= \sum_{t=0}^{\infty}\, \underbrace{\Pr[F|B_t]}_{=1}\Pr[B_t] + \underbrace{\Pr[F|B]}_{=0}\Pr[B]\\
&= \sum_{t=0}^{\infty} \Pr[X_t=\mathcal S] = \sum_{t=0}^{\infty} \Pr[X_t=\mathcal S, A_t=a_{\mathcal{S}}] \\
&= \sum_{t=0}^\infty E\left [c(X_t,A_t)\right],
\end{split}
\end{equation}
and the claim follows. Also, the above derivation highlights that $\sum_{t=0}^\infty E_{\pi,\beta}\left [c(X_t,A_t)\right]$ = $\rho(\mathcal S, a_{\mathcal S})$, hence the cost function is indeed equal to the occupation measure for the state-action pair $(\mathcal S, a_{\mathcal S})$.

Similarly, the constraint costs are defined as 
\[
d(y,a) =
\begin{cases}
t  & \textrm{if } y\in X\setminus T, a=(x,t) \in A(y), \\
0  & \textrm{if } y\in T,\, \,  a=(x,t) \in A(y), \\
0 & \textrm{if } y = \mathcal{S}, \, \, a=a_{\mathcal{S}}.
\end{cases}
\]
Note that, for simplicity, travel times are assumed deterministic, but our framework can be easily extended to the case where travel times are stochastic. With these definitions both the objective and the constraint costs evaluate to zero on the absorbing set $\mathbf{M} = T$ (in particular, $d(y,a)$ for $y\in T$ is set to zero rather than to $t_{yy}$). 

In this paper we assume that the constraint costs (i.e., the travel times) are not exactly known  (as it is oftentimes the case in practical scenarios), and they are instead characterized through a budgeted interval uncertainty model (see Assumption \ref{def:uncert}). Conceptually, the traversal time $t$ selected by a robot should be interpreted as an \emph{intended} travel time: the (uncertain) constraint costs then translate intended travel times into \emph{actual} travel times, whereas safety functions  translate  intended travel times into risk of failure (the idea being that, for example, shorter intended travel times correspond to more dangerous maneuvers). Note that within our model the transition probabilities are \emph{certain} (the uncertainty in the mapping from intended travel times to risk is encapsulated in the safety function $S_e$). One could also consider formulations where the transition probabilities and the objective costs are uncertain as well: this, however, would significant complicate the model and is left for future research.

The single-robot deployment problem is then reformulated as a RCMDP problem: find a Markovian policy that minimizes the probability of failure (i.e., the summation of the expected objectives costs $c$) while \emph{robustly} keeping the traversal time (i.e., the summation of the expected constraint costs $d$) below a given temporal deadline $D$. The robustness of the formulation stems from the fact that we consider traversal times (i.e., the functions $d(y,a)$) that are uncertain.

This formulation is consistent  with our definition of task duration as discussed in Section \ref{subsec:gendes}. Other formulations are possible, for example one might be interested in minimizing the expected deployment duration while constraining the probability of failure below a given threshold (possibly with uncertain safety functions), or might desire to constrain the duration of the deployment task only for those executions that do not involve failures. Such formulations are left for future research. We mention, however, that in many cases the latter formulation and our formulation are essentially equivalent. This can be seen by observing that $E[\mathrm{task \, duration}|\mathrm{success}] \Pr(\mathrm{success})\,\leq E[\mathrm{task \, duration}] \leq E[\mathrm{task \, duration}|\mathrm{success}] $, where the first inequality is a consequence of the law of total expectation and the second inequality follows from the fact that in case of failure the state enters $T$ through a {\em shortcut} via $\mathcal{S}$. Assuming that  $\Pr(\mathrm{success})$ (which can be computed exactly in our formulation) is high, say, 90\%, the discrepancy between the two formulations will then be small. We investigate this aspect further in Section  \ref{sec:simulation}.

\subsection{Formulation as RCMDP -- Multi-Robot}\label{sec:RCMDP_multi}
The multi-robot formulation is indeed a simple extension of the single-robot formulation. The main tenet in our approach is that we seek minimalistic control strategies that do not involve any communication once the robotic swarm is deployed (which is oftentimes a requirement for the deployment of robotic swarms comprising simple and inexpensive platforms). We note that the proposed model can also serve as a yardstick for comparison with more sophisticated coordination mechanisms, by providing easily-computable bounds for the achievable safety-speed Pareto curves. Our key technical assumption is that  congestion effects (i.e., robots colliding into each other) are  negligible. In other words, we assume that failures are \emph{statistically independent}. This is normally the case in the fast growing domain of  robotic swarms comprising minimalistic, palm-sized micro-aerial vehicles (MAVs) \cite{Purohit2011a,Kumar01092012,AK:DM:12,DM:NM:12}.

Consider $K\geq |T|$ robots (if $K< |T|$, clearly, the deployment objective as formulated before can not be accomplished). Our formulation of the multi-robot deployment problem essentially decouples the target assignment problem from the path planning problem, specifically:
\begin{enumerate}
\item each robot is assigned a deployment location in $T$;
\item each robot executes a single-robot deployment policy (by solving the associate RCMDP) to reach the assigned location.
\end{enumerate} 
Such formulation fulfills the requirement that robots do not need to communicate during the deployment process (e.g.,
to learn how many robots are in the team, or to communicate that a certain target vertex has been already reached, or
to update the safety functions $S_e$). The problem, then, is how to assign targets to robots. Let $\alpha:=(\alpha_1, ...\ldots, \alpha_{K})\in T^K$ denote a target assignment, with the understanding that $\alpha_i = v\in T$ implies that the $i$th robot is assigned to target $v$, $i=\{1, \ldots, K\}$. Since each robot executes a  single-robot deployment policy (in a statistically independent fashion, by assumption), the probability that the overall deployment is successful, given an assignment $\alpha$, can be easily computed from the probabilities that each robot can successfully reach its assigned target. Specifically, let $\varphi(\alpha)$ denote the probability that the deployment task is successful for a given assignment $\alpha$, and let $\mathrm{PF}(v)$, for $v\in T$, denote the probability that a robot assigned to target $v$ fails to reach such target (these probabilities are simply the optimal costs of the corresponding RCMDPs). The probability $\varphi(\alpha)$ is then given by
\[
\varphi(\alpha)  = \prod_{j=1}^{|T|}\, \Bigl(1 - \mathrm{PF}(v_j)^{|\{\alpha_i\in \alpha:\, \, \alpha_i=v_j\}|} \Bigr),
\]
where $v_j$ is the $j$th target in $T$. Accordingly, the target assignment problem becomes:
\begin{quote} {\bf Target Assignment (TA)} --- For $K\geq|T|$, solve
\begin{align*}
&\max_{k_j} \quad \prod_{j=1}^{|T|}\, \Bigl(1 -\mathrm{PF}(v_j)^{k_j+1} \Bigr)\\
&\textrm{s.t.}\qquad \sum_{j=1}^{|T|} \, k_j = K-|T|\\
&\quad \qquad k_j\geq 0, \, \,k_j\in\naturals,\,\, j\in \{1,\ldots, |T|\}.
\end{align*}
\end{quote}
The formulation assigns target $v_1\in T$ to the robots 1 to $k_1 +1$,  target $v_2$ to robots $k_1+2$ to $k_1 + k_2 +2$ and so on, in order to minimize the global success probability. The rapid multi-robot deployment problem is then formulated as follows (assuming $K\geq |T|$):
\begin{enumerate}
\item For each target in $v\in T$, solve the RCMDP version of the single-robot deployment problem with target set equal to $v$ (note that the optimal cost of the RCMDP is exactly the probability $\mathrm{PF}(v)$).
\item Solve the target assignment problem  TA and assign targets to robots accordingly.
\item Let each robot execute the single-robot optimal deployment policy to reach its assigned target.
\end{enumerate} 

This formulation requires computationally-efficient solutions for RCMDP and TA, which are discussed next.
 
\section{Efficient Solution for RCMDP}\label{sec:dp}

As shown in the proof of Theorem \ref{thm:OM} (provided in the Appendix), the RCMDP problem can be solved as a robust linear optimization problem, and in particular as a linear optimization problem with a number of constraints equal to the number of vertices of the uncertainty set $\mathcal U$. Since this number is, possibly, exponential\footnote{The number of vertices indeed depends on the value of $\Gamma$. In the extreme case where $\Gamma = 0$ the uncertainty set $\mathcal U$ has only \emph{one} vertex.} in the number of states and actions, it becomes critical  to find an efficient algorithm for the solution of problem $\mathcal{OPT}$.  We next show how problem $\mathcal{OPT}$ can be transformed into a linear programming problem with a number of constraints and variables that is \emph{linear} in $|\mathcal{K}'|$  for \emph{all} choices of the uncertainty budget $\Gamma$ (the strategy is to introduce an auxiliary set of variables whose cardinality is $ |\mathcal{K}'|+1$). Consider the following optimization problem:
\begin{quote} {\bf Optimization problem $\mathcal{OPT}_2$} --- Given an initial distribution $\beta$ and a risk threshold $D$, solve\begin{alignat*}{2}
\min_{\rho, \, \lambda,\, \mu} \quad \!\!\!\!& & \quad& \sum_{(x,a) \in \mathcal{K}'}\rho(x,a)c(x,a)\\  
\text{s.t.} & & \quad&\sum_{y\in \mathbf{X}'}\sum_{a\in A(y)}\rho(y,a)\left[\delta_x(y)- \mathcal{P}_{yx}^a\right]= \beta(x), \forall x\in \mathbf{X}'\\
& &\quad&\sum_{(x,a) \in \mathcal{K}'}\rho(x,a)d(x,a)+\overline{\epsilon}(x,a) \lambda(x,a) + \mu \, \Gamma\leq D\\   
& & \quad& \lambda(x,a) + \mu\geq \rho(x,a),\quad\forall (x,a) \in \mathcal{K}'\\
& &\quad&\rho(x,a), \, \lambda(x,a) \geq 0,\,\,\forall (x,a) \in \mathcal{K}'\\
& &\quad&\mu\geq 0.
\end{alignat*}
\end{quote}
Note that optimization problem $\mathcal{OPT}_2$ involves $2 |\mathcal{K}'|+1$ decision variables ($\mu$ is just a scalar).

\begin{theorem}\label{thm:LP}
Let $\{\rho^*(x,a)\}_{(x,a) \in \mathcal{K}'}$ be part of the optimal solution of problem $\mathcal{OPT}_2$. Then, $\{\rho^*(x,a)\}_{(x,a) \in \mathcal{K}'}$ is an optimal solution to problem $\mathcal{OPT}$.
\end{theorem}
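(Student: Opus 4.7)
\medskip

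\noindent\textbf{Proof Plan.} The plan is to show that $\mathcal{OPT}_2$ is obtained from $\mathcal{OPT}$ by dualizing the inner maximization over the uncertainty set $\mathcal U$ and then absorbing the resulting auxiliary minimization into the outer minimization over $\rho$. Since only the robust constraint differs between the two problems, establishing equivalence of the two feasible sets (in terms of the $\rho$ coordinates) will immediately yield the result.

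First, I would isolate the nominal and uncertain parts of the robust constraint in $\mathcal{OPT}$, writing it as
\begin{equation*}
\sum_{(x,a)\in\mathcal K'}\rho(x,a)\,d(x,a)\;+\;\max_{\epsilon\in\mathcal U}\sum_{(x,a)\in\mathcal K'}\rho(x,a)\,\epsilon(x,a)\;\le\;D.
\end{equation*}
For a \emph{fixed} $\rho\ge 0$, the inner problem
\begin{equation*}
\max_{\epsilon}\;\sum_{(x,a)}\rho(x,a)\,\epsilon(x,a)\quad\text{s.t.}\quad 0\le\epsilon(x,a)\le\overline\epsilon(x,a),\;\;\sum_{(x,a)}\epsilon(x,a)\le\Gamma
\end{equation*}
is a linear program in $\epsilon$, with nonempty compact feasible set (so the maximum is attained, in the spirit of Lemma~\ref{lemma:sup}) and hence finite optimal value. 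Assigning dual variables $\lambda(x,a)\ge 0$ to the upper-bound constraints $\epsilon(x,a)\le\overline\epsilon(x,a)$ and a scalar $\mu\ge 0$ to the budget constraint, the LP dual is
\begin{equation*}
\min_{\lambda,\mu\ge 0}\;\sum_{(x,a)}\overline\epsilon(x,a)\,\lambda(x,a)+\mu\,\Gamma\quad\text{s.t.}\quad\lambda(x,a)+\mu\ge\rho(x,a),\;\forall(x,a)\in\mathcal K'.
\end{equation*}
By strong LP duality (applicable since both the primal and the dual are feasible and bounded), this dual value equals the inner maximum.

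Substituting the dual representation of the inner maximum into the robust constraint, the constraint $\max_\epsilon(\cdot)\le D$ is equivalent to the \emph{existence} of a pair $(\lambda,\mu)$ satisfying $\lambda(x,a)+\mu\ge\rho(x,a)$, $\lambda(x,a),\mu\ge 0$, and
\begin{equation*}
\sum_{(x,a)}\rho(x,a)\,d(x,a)+\sum_{(x,a)}\overline\epsilon(x,a)\,\lambda(x,a)+\mu\,\Gamma\;\le\;D.
\end{equation*}
Therefore $\rho$ is feasible for $\mathcal{OPT}$ if and only if there exist $(\lambda,\mu)$ such that $(\rho,\lambda,\mu)$ is feasible for $\mathcal{OPT}_2$. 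Since the two problems share an identical objective (which depends only on $\rho$), joint minimization over $(\rho,\lambda,\mu)$ produces the same optimal value and the same set of optimal $\rho$-components as $\mathcal{OPT}$. In particular, the $\rho^*$ part of any optimizer of $\mathcal{OPT}_2$ is optimal for $\mathcal{OPT}$, which is the claim.

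The main technical point to guard against is the applicability of strong duality: one must check that the inner LP is always feasible (which holds trivially since $\epsilon\equiv 0$ lies in $\mathcal U$) and that the dual is feasible (which holds by taking, e.g., $\mu\ge\max_{(x,a)}\rho(x,a)$ and $\lambda\equiv 0$). Beyond this verification, the argument is essentially bookkeeping.
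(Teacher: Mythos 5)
Your proposal is correct and follows essentially the same route as the paper's proof: dualize the inner maximization over $\mathcal U$, invoke strong LP duality to equate $p^\ast(\rho)$ with $d^\ast(\rho)$, and replace the robust constraint with the existence of $(\lambda,\mu)$ satisfying the resulting linear inequalities. Your explicit verification that both the primal and dual of the inner LP are feasible (via $\epsilon\equiv 0$ and $\mu$ large, $\lambda\equiv 0$) is a small but welcome addition that the paper leaves implicit.
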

\begin{proof}
For any fixed $\rho:=\{\rho(x,a)\}_{(x,a) \in \mathcal{K}'}$, and $D\in\reals_{\geq0}$, consider the constraint:
\begin{equation}\label{constraint_1_RS}
\max_{\epsilon\in \mathcal U} \sum_{(x,a) \in \mathcal{K}'}\rho(x,a)(d(x, a)+\epsilon(x,a)) \leq D,
\end{equation}
which can be written equivalently as 
\[
\max_{\epsilon\in \mathcal U} \sum_{(x,a) \in \mathcal{K}'}\rho(x,a)\epsilon(x,a) \leq D-\sum_{(x,a) \in \mathcal{K}'}\rho(x,a)d(x, a).
\]
The optimization problem on the left-hand side of the above equation can be written explicitly as:
\begin{alignat*}{2}
p^\ast(\rho)=\max_{\epsilon} & & \quad& \sum_{(x,a) \in \mathcal{K}'}\rho(x,a)\epsilon(x,a)\\  
\text{s.t.} & & \quad&0\leq \epsilon(x,a)\leq \overline{\epsilon}(x,a),\quad \forall  (x,a) \in \mathcal{K}' \\
& & \quad &\sum_{(x,a)\in\mathcal K^\prime} \epsilon(x,a)\leq \Gamma.
\end{alignat*}

Consider the dual formulation of the above linear programming problem: 
 \begin{alignat*}{2}
d^\ast(\rho)=\min_{\lambda, \, \mu} & & \quad&\!\!\!\!\!\!\sum_{(x,a) \in \mathcal{K}'}\overline{\epsilon}(x,a) \lambda(x,a)  +\mu \,  \Gamma\\  
\text{s.t.} & & \quad& \lambda(x,a) + \mu\geq \rho(x,a),\quad\forall (x,a) \in \mathcal{K}'\\
& &\quad&\lambda(x,a) \geq 0,\quad \forall(x,a)\in \mathcal{K}'\\
& &\quad & \mu \geq 0.
\end{alignat*}
By strong duality in linear programming, the primal optimal cost is equal to the dual optimal cost. Thus, one has $p^\ast(\rho)=d^\ast(\rho)$. The constraint in expression (\ref{constraint_1_RS}) can then be equivalently written as
\begin{equation}\label{eq:constraint}
\begin{split}
&\exists \,\lambda,\, \mu:\\
&\sum_{(x,a) \in \mathcal{K}'}\rho(x,a)d(x,a)+\overline{\epsilon}(x,a) \lambda(x,a) +\mu\, \Gamma\leq D\\  
& \lambda(x,a) + \mu \geq \rho(x,a),\quad \forall (x,a) \in \mathcal{K}'\\
& \lambda(x,a) \geq 0,\quad \forall (x,a) \in \mathcal{K}'\\
& \mu\geq 0.
\end{split}
\end{equation}
This implies that the robust constraint  in problem $\mathcal{OPT}$ can be replaced by the set of linear constraints in expression (\ref{eq:constraint}). The claim then follows.
\end{proof}
Since the number of constraints in the dual formulation is always $O(|\mathcal K^\prime|)$, the computational complexity of  problem $\mathcal{OPT}_2$ (assuming one uses an interior point algorithm for its solution) is linear in the number of state-action pairs  \cite{YN:AN-94}, for \emph{all} choices of the uncertainty budget $\Gamma$.

\section{Efficient Solution of the Target Assignment Problem}\label{sec:Efficient_TA}
In this section we show how to efficiently solve the TA problem introduced in Section \ref{sec:RCMDP_multi}.
In the following, we assume that $\mathrm{PF}(v_j) \in (0,1)$ for all $j$. This is without loss of generality. In fact, if $\mathrm{PF}(v_j) =1$ for some $j$, the deployment problem is infeasible (since it is impossible to reach target $j$ no matter how many robots are destined there), while if $\mathrm{PF}(v_j) =0$ for some $j$, then one should just send one robot to target $j$ and consider the reduced problem without target $j$. 

Consider, first, the case where $K\geq |T|$ and $K<2|T|$. This is the case where the number of robots is on the order of the number of target locations. In practice, the number of elements in $T$ is usually no larger than a small constant, say, 50, and in this case problem TA can be efficiently solved by using branch and bound techniques.

Let us consider the more challenging case where $K\geq 2|T|$ (the usual case for robotic swarms). In the remainder of this section we present a polynomial-time, asymptotically-optimal algorithm for the solution of problem $\mathrm{TA}$.  To this purpose, consider the following relaxed version of problem $\mathrm{TA}$:
\begin{quote} {\bf Relaxed Target Assignment (RTA)} ---  For $K\geq2|T|$, solve
\begin{align*}
&\max_{k_j} \quad \sum_{j=1}^{|T|}\log\left(1 -\mathrm{PF}(v_j)^{k_j+1} \right)\\
&\textrm{s.t.}\qquad \sum_{j=1}^{|T|} \, k_j = K-2|T|,
\end{align*}
\end{quote}
where we have taken the $\log$ of the objective function, we have relaxed the integrality and non-negativity constraints, and we have reduced the right hand side of the equality constraint to $K-2|T|$. Note that problem RTA is a concave maximization problem with a linear equality constraint. The first order sufficient optimality condition reads as \cite{LuenbergerBook}:
\begin{equation}\label{eq:lambda}
\lambda^\ast=\frac{\mathrm{PF}(v_j)^{k^\ast_j+1}\log(\mathrm{PF}(v_j))}{1-\mathrm{PF}(v_j)^{k^\ast_j+1}},\,\text{for all } j\in\{1,\ldots,|T|\},
\end{equation}
which implies:
\[
k^\ast_j=\log\left(\frac{\lambda^\ast}{\lambda^\ast+\log(\mathrm{PF}(v_j))}\right)\frac{1}{\log(\mathrm{PF}(v_j))}-1,
\]
for all  $j\in\{1,\ldots,|T|\}$. Note that since $\mathrm{PF}(v_j) \in (0,1)$, equation \eqref{eq:lambda} implies $\lambda^*\leq 0$. The Lagrangian multiplier is found by solving the primal feasibility equation, that is:
\[
\sum_{j=1}^{|T|}\log\left(\frac{\lambda^\ast}{\lambda^\ast+\log(\mathrm{PF}(v_j))}\right)\frac{1}{\log(\mathrm{PF}(v_j))}=K - 2|T|,
\]
which can be readily solved, for example, by using the bisection method.

Consider the approximation algorithm for problem TA in  Algorithm \ref{alg:algorithm_different_s}. We next prove that the above approximation algorithm is asymptotically optimal, that is it provides a feasible solution for problem TA whose cost converges to the optimal cost as $K\to +\infty$.
\begin{algorithm}
\caption{Approximation algorithm for problem TA}
\label{alg:algorithm_different_s}
\algsetup{linenodelimiter=}
\begin{algorithmic}[1]
\STATE Solve problem RTA and obtain $\lambda^*$ and $k_j^*$ for all $j\in \{1, \ldots, |T|\}$\;
\STATE Pick (arbitrarily) a set of \emph{non-negative} integers $r_1, r_2, \ldots, r_{|T|}$ such that $\sum_{j=1}^{|T|} r_j= K - |T| - \sum_{j=1}^{|T|}\lceil k_j^*\rceil$ (note that $ \sum_{j=1}^{|T|}\lceil k_j^*\rceil \leq K- |T|$, so this is always possible)\;
\STATE $\bar k_j \leftarrow \lceil k_j^*\rceil + r_j$\;
\STATE Return $\{\bar k_j\}_j$ as approximate solution for problem TA\;
\end{algorithmic}
\end{algorithm}

\begin{theorem}\label{thm:analysis_approx}
The approximation algorithm for problem TA:
\begin{enumerate}
\item delivers a \emph{feasible} solution for all values of $K\geq 2|T|$,
\item is asymptotically optimal.
\end{enumerate} 
\end{theorem}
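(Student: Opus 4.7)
The plan is to establish feasibility and asymptotic optimality separately.

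For feasibility, I would proceed in three steps. First, since the RTA objective $\sum_j \log(1-\mathrm{PF}(v_j)^{k_j+1})$ is only finite when $k_j > -1$ (as $\mathrm{PF}(v_j)\in(0,1)$ forces $\mathrm{PF}(v_j)^{k+1}<1$ iff $k>-1$), the RTA optimizer must satisfy $k_j^*>-1$, and hence $\lceil k_j^*\rceil\ge 0$ for every $j$. Second, combining $\lceil x\rceil\le x+1$ with the primal feasibility $\sum_j k_j^* = K-2|T|$ gives $\sum_j \lceil k_j^*\rceil\le K-|T|$, ensuring that the non-negative integers $r_j$ required in Step~2 of the algorithm can indeed be chosen. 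Third, by construction $\bar k_j=\lceil k_j^*\rceil+r_j\in\naturals$ with $\sum_j\bar k_j=K-|T|$, so $\{\bar k_j\}$ satisfies the constraints of problem TA.

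For asymptotic optimality, I would pass to log-costs. Let
\[
G(B):=\max\Bigl\{\sum_{j=1}^{|T|}\log\bigl(1-\mathrm{PF}(v_j)^{k_j+1}\bigr):k\in\reals^{|T|},\ \sum_j k_j=B\Bigr\}
\]
denote the optimal value of RTA as a function of its budget $B$. Since the feasible set of TA (with budget $K-|T|$) is contained in that of RTA with the same budget, $\log\mathrm{OPT}_{\mathrm{TA}}\le G(K-|T|)$. Conversely, since $\log(1-\mathrm{PF}(v_j)^{k+1})$ is monotone increasing in $k$ and $\bar k_j\ge k_j^*$ componentwise (because $\lceil k_j^*\rceil\ge k_j^*$ and $r_j\ge 0$), the algorithm's log-cost satisfies $\log f(\bar k)\ge \sum_j\log(1-\mathrm{PF}(v_j)^{k_j^*+1})=G(K-2|T|)$. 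The log-cost optimality gap is therefore bounded by $G(K-|T|)-G(K-2|T|)$, and the claim reduces to showing this difference vanishes as $K\to\infty$.

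The main obstacle is this last step. I would settle it by observing that $G$ is non-decreasing in $B$ (a larger budget can only help a maximization) and is bounded above by $0$ (each term $\log(1-\cdot)\le 0$), so $\lim_{B\to\infty}G(B)$ exists. To identify this limit, I would plug in the uniform feasible allocation $k_j=B/|T|$: then $\mathrm{PF}(v_j)^{k_j+1}\to 0$ as $B\to\infty$ and the associated RTA objective tends to $0$, showing $G(B)\to 0$. Hence $G(K-|T|)-G(K-2|T|)\to 0$, and exponentiating yields $f(\bar k)/\mathrm{OPT}_{\mathrm{TA}}\to 1$, establishing asymptotic optimality.
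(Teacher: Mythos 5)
Your proof is correct, and while the feasibility half follows the paper's argument, the asymptotic-optimality half takes a genuinely different route. On feasibility, both you and the paper rely on the barrier property of the RTA objective to conclude $k_j^*>-1$ and hence $\bar k_j\ge 0$; you additionally supply the justification (via $\lceil x\rceil\le x+1$ and $\sum_j k_j^*=K-2|T|$) for the claim $\sum_j\lceil k_j^*\rceil\le K-|T|$, which the paper only asserts parenthetically in the algorithm's description. On asymptotic optimality, the paper bounds the log-error by $|T|\max_j\bigl|\log\bigl(\bigl(1-\mathrm{PF}(v_j)^{k_j^{**}+1}\bigr)/\bigl(1-\mathrm{PF}(v_j)^{\bar k_j+1}\bigr)\bigr)\bigr|$ and then argues that every component of both the TA optimum $\{k_j^{**}\}$ and the rounded solution $\{\bar k_j\}$ diverges as $K\to\infty$ --- a structural claim about the optimizers that the paper only sketches (``if some of the $k_j$ stay uniformly bounded one obtains a suboptimal solution''). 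You instead introduce the relaxation's value function $G(B)$, sandwich both $\log\mathrm{OPT}_{\mathrm{TA}}$ and the algorithm's log-cost between $G(K-2|T|)$ and $G(K-|T|)$ using componentwise monotonicity of the objective and $\bar k_j\ge k_j^*$, and show $G(B)\to 0$ via monotonicity of $G$ plus the uniform allocation. Your route needs no claim about how the TA optimizer behaves and replaces the paper's informally argued divergence step with a clean monotone-limit argument; the paper's route, if fleshed out, localizes the error target by target and so could in principle yield per-target error estimates, but as written it leaves more to the reader. Both establish the theorem.
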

\begin{proof}
First, note that the objective function in problem RTA acts as a barrier function, hence its optimal solution, $\{k^*_j\}_j$, satisfies the condition $k^*_j>-1$ for all $j\in\{1,\ldots, |T|\}$. This implies that the solution provided by the approximation algorithm, $\{\bar k_j\}$, is a feasible solution for problem TA, in fact:
\begin{enumerate}
\item $\bar k_j\geq 0$ (since $k^*_j>-1$),
\item $\sum_{j=1}^{|T|}\bar k_j  = K-|T|$ (by construction).
\end{enumerate}
This proves the first part of the claim. To prove the second part of the claim, consider the approximation error, $e$, in terms of the log of the objective function of problem TA. The approximation error can be bounded as:
\begin{equation}
\begin{split}
|e| \!&= \!\Biggl |\sum_{j=1}^{|T|}\log\left(1 \!-\!\mathrm{PF}(v_j)^{k^{**}_j+1}\right) \! -\!  \sum_{j=1}^{|T|}\log\left(1 \!-\!\mathrm{PF}(v_j)^{\bar k_j+1} \right)\Biggr| \\
&\leq\sum_{j=1}^{|T|} \Biggl|\log\frac{\left(1 -\mathrm{PF}(v_j)^{k^{**}_j+1}\right)}{\left(1 -\mathrm{PF}(v_j)^{\bar k_j+1} \right)}  \Biggr|,
\end{split}
\end{equation}
where $\{k^{**}_j\}_j$ is the optimal solution to problem TA (which should not be confused with the optimal solution to problem RTA, that is, $\{k^*_j\}_j$). Hence, one can bound $e$ according  to
\[
e \leq T\, \max_{j}  \Biggl|\log\frac{\left(1 -\mathrm{PF}(v_j)^{k^{**}_j+1}\right)}{\left(1 -\mathrm{PF}(v_j)^{\lceil k_j^*\rceil + r_j+1} \right)}  \Biggr|.
\]
Note that \emph{all} $\{k_j^{**}\}_j$'s and $\{k_j^{*}\}_j$'s  tend to $+\infty$ as $K\to \infty$ (this can be easily shown by noticing that if some of the $k_j$ variables stay uniformly bounded as $K\to +\infty$, for $K$ large enough one obtains a suboptimal solution --- this exploits the assumption that $\mathrm{PF}(v_j)\in (0,1)$). Hence, as $K\to +\infty$, one obtains $|e|\to 0$.
\end{proof}

In summary, when $K\leq 2|T|$, we directly solve the target assignment problem using branch and bound.  When $K\geq 2|T|$ the solution can instead be (approximately) found by solving a convex problem.

\section{Solution Algorithm for Rapid Swarm Deployment}\label{sec:swarm_algo}

Collecting the results so far, we can now present a computationally-efficient algorithm for the solution of the rapid multi-robot deployment problem.  The pseudocode is presented in Algorithm \ref{alg:algorithm_different_s2}.

\begin{algorithm}
\caption{Deployment Algorithm}
\label{alg:algorithm_different_s2}
\algsetup{linenodelimiter=}
\begin{algorithmic}[1]
\STATE $\alpha\leftarrow$ solution (possibly approximate)  to target assignment problem TA\;
\STATE Assign target $\alpha_k$ to each robot $k \in \{1, \ldots, K\}$\;
\FOR{\textbf{each} $k$}
\STATE Build  RCMDP instance  with $\mathbf{M}=\{v_{\alpha_k}\}$\;
\STATE $(\rho^*, \lambda^*, \mu^*) \leftarrow$ Solve Problem $\mathcal{OPT}_2$ 
\FOR{\textbf{each}  $x\in \mathbf{X}',a\in A(x) $}
\IF{$\sum_{a\in A(x)} \rho^*(x,a)>0$} 
\STATE $\pi^*(x,a) \leftarrow {\rho^*(x,a)}/{\left(\sum_{a\in A(x)} \rho^*(x,a)\right)}$ 
\ELSE
\STATE Choose an arbitrary value  for $\pi^*(x,a)$
\ENDIF
\ENDFOR
\STATE Navigate to $v_{\alpha_k}$ according to policy $\pi^*$
\ENDFOR
\end{algorithmic}
\end{algorithm}

In case no a-priori coordination is possible, one should consider as target assignment strategy a random uniform choice of the targets (made in a distributed fashion by the robots). 


\section{Numerical Experiments and Discussion}
\label{sec:simulation}

We present three sets of numerical experiments. In the first set of experiments we focus on the single-robot deployment problem, for a fixed uncertainty budget. The objective is to experimentally verify the correctness of the results obtained for problem RCMDP. In the second set of experiments we investigate the sensitivity of the solution for the single-robot deployment problem with respect to different  values of the uncertainty budget (in other words, we investigate how a system designer can trade robustness and performance). In the third set of experiments, we consider the swarm deployment problem, and we discuss how the success probability scales with the number of robots for both a random target assignment algorithm (which requires no coordination among the robots), and the optimized target assignment algorithm (which requires a priori coordination among the robots) discussed in Section \ref{sec:Efficient_TA}.

For all numerical experiments we consider a map that is obtained from
the publicly available {\em Radish} dataset (see Figure \ref{fig:maps}), where the traversal graph used for the simulations is overlaid onto the blueprint of the environment. 
For lack of space we present results concerning this map only, but our results are representative for a wide range of 
environments.
Target vertices are marked with a number and the initial vertex is
indicated with a pink triangle (vertex number 1). Each edge is characterized by a different safety 
function $S_e$. In our experiments we considered safety functions with a sigmoidal shape (as in Figure \ref{fig:sfunction}). 
The parameters $t_{\mathrm{min}}$ and $t_{\mathrm{max}}$ (as defined in Figure \ref{fig:sfunction}) were randomly selected (in a way, of course, that $t_{\mathrm{min}}<t_{\mathrm{max}}$) and are in 
general different for each edge. This particular choice of the safety function $S_e$ is meant to be just an example, and in practice one can consider any safety function as long as it obeys the constraints outlined in Section \ref{sec:formulation}. In all experiments we assume that the constraint cost uncertainty is upper bounded by $ \overline\epsilon(x,a)=0.5\,d(x,a)$ for all $(x,a)\in\mathcal K^\prime$.

\begin{figure}[htb]
\centering
\includegraphics[height=0.88 \columnwidth]{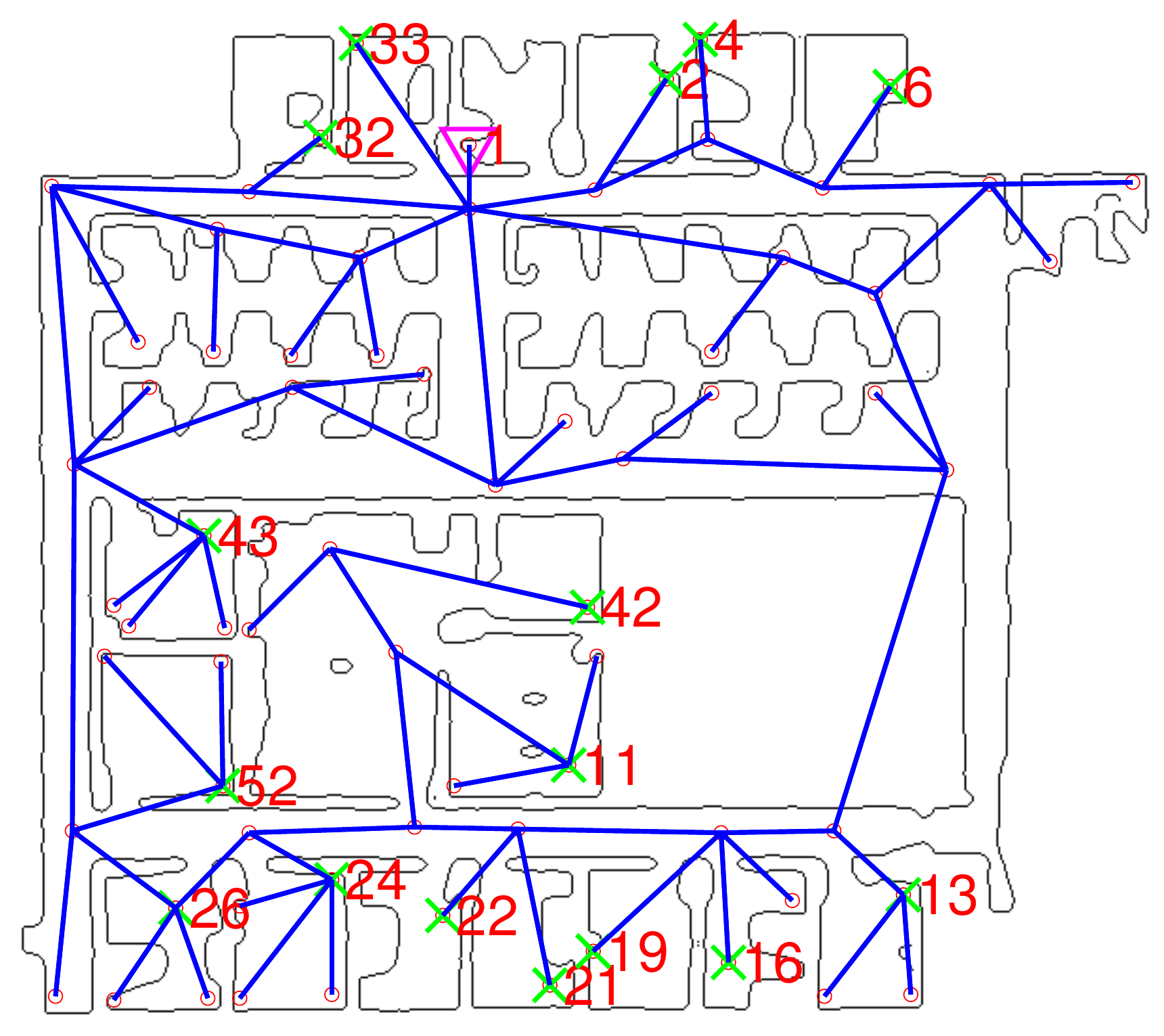}
\caption{The map used to experimentally evaluate the deployment
policies is the same as the one used in \cite{CarpinICRA2013}.  
The deployment vertex is marked with a pink triangle,
whereas goal vertices are indicated by green crosses. Edges between vertices indicate
that a path exists.}
\label{fig:maps}
\end{figure}


\subsection{Single-Robot Deployment Problem}\label{subsec:single}

In this section we numerically investigate the rapid single-robot deployment problem along two dimensions: convergence of the empirical success probability to its theoretical value (equal to $\rho(\mathcal S, a_{\mathcal S})$), and fulfillment of temporal constraint \emph{in expectation}. Collectively, these experiments are aimed at showing the correctness of our approach (i.e., of Algorithm \ref{alg:algorithm_different_s2}) for the single-robot case. For all experiments in this section, we consider an uncertainty budget 
\[
\Gamma=1 \times\sum_{(x,a)\in\mathcal K^\prime} \,\overline \epsilon(x,a),
\]  
and we assume that vertex 13 is the target vertex (a parametric study for different values of $\Gamma$ is presented in Section \ref{subsec:bunc}).

%

To study  the convergence of the empirical success probability with respect to the number of Monte Carlo trials, we  consider a temporal deadline $D=237$. The convergence error is defined as:
\[
\mathrm{error} := \frac{|\mathrm{empirical \,\, \, PF} - \mathrm{theoretical \,\, \, PF}| }{\mathrm{theoretical \,\, \, PF}}.
\]
Alternatively, one can consider the binary random variable $\mathcal M\in\{\text{success},\text{fail}\}$, and then study the Kullback-Leibler (KL) divergence for the probability mass function associated with $\mathcal M$, that is:
\[
D_{\text{KL}}=  \ln\left(\frac{P_{\mathcal M}}{1-\rho(\mathcal S, a_{\mathcal S})}\right) P_{\mathcal M}+ \ln\left(\frac{1-P_{\mathcal M}}{\rho(\mathcal S, a_{\mathcal S})}\right) (1-P_{\mathcal M}),\!
\]
where $P_{\mathcal M}$ is the empirical success probability and $1-\rho(\mathcal S, a_{\mathcal S})$ is the theoretical success probability.
The results are presented in Table \ref{table:convergence}. One can note that (i) the empirical success probability converges to the theoretical success probability, as expected, and (ii) with 100-1,000 Monte Carlo trials convergence is already satisfactory. Accordingly, for the numerical experiments in the remainder of this section and for those in Section \ref{subsec:bunc} we will use a number of Monte Carlo trials equal to $1,000$, while for the numerical experiments in Section \ref{subsec:swarms} we will use a number of Monte Carlo trials equal to $100$.

\begin{table}
\centering
{\small
\begin{tabular}{|c|c|c|c|c|}
        \hline 
 & MC = 100   & MC = 1,000  & MC = 5,000     & MC 10,000     \\
        \hline 
Error &$4.82\%$&$1.37\%$&$0.57\%$&$0.46\%$\\       
$D_{\text{KL}}$ &$0.0146$&$0.0011$&$0.0002$&$0.0001$\\       
 \hline
\end{tabular}
}
\caption{Convergence analysis for empirical success probability (MC stands for number of Monte Carlo trials).}
\label{table:convergence}
\end{table}

We next study how the temporal constraint is fulfilled \emph{in expectation}.
We consider different values for the temporal deadlines, namely $D=\{175,237, 299\}$. 
 Results are presented in Table \ref{table:fulfillment}. The first and second rows of the table report, respectively, the theoretical and the empirical success probabilities, which, as expected, increase as the temporal threshold  is increased. The third row reports the values for the constraint costs, which are always lower (in expectation) than their corresponding  thresholds. The fourth row reports standard deviations. One can note that the standard deviations are rather large: this is due to the fact that, in our formulation, the duration of a deployment task is given by the elapsed time between the instant when the robot starts moving and the instant when it stops moving, respectively. Such duration has a large spread (since failures might induce \emph{early} termination), and the standard deviation is consequently quite large. In the fifth row, we report the expected time to successfully complete the deployment task, conditioned on incurring no failures. For $D=175$, such time is slightly higher than the threshold. This mismatch is explained by the relatively high failure probability shown in the first row of the
 table. On the contrary,  for higher values of $D$ the probability of failure is lower and
 the temporal deadline is met. These findings confirm our discussion in Section \ref{subsec:singlerobot} about the relation between our formulation and a formulation where one constrains the duration of the deployment task \emph{assuming no failures}. Finally, in the last row of the table one can notice  very low values for the standard deviations (again, conditioned on no failures): this is due to two facts (i) assuming no failures, the only source of randomness is the randomization of the control policies, and (ii) according to \cite[Theorem 3.8]{Altman1996}, with one constraint (since there is no uncertainty) an optimal policy only uses \emph{one} randomization. 

\begin{table}
\centering
{\small
\begin{tabular}{|l|c|c|c|}
        \hline 
  & $D=175$ & $D =237 $     & $D=299$    \\
        \hline 
        Theoretical success prob. &$0.5356$ &$0.7321$ &$0.9172$ \\
       Empirical success prob. &$0.5350$&$0.7370$& $0.9140$\\      
Expectation &$115.2550$&$158.5350$&$199.7270$\\ 
Standard dev. &$94.8038$&$72.5002$&$34.0615$\\ 
Expectation (success)&$200$&$199.8225$&$207.9409$\\    
Standard dev. (success) &$0$&$0.5441$&$1.3213$\\   
 \hline
\end{tabular}
}
\caption{Probability of success and fulfillment of temporal constraint for single-robot deployment.}
\label{table:fulfillment}
\end{table}

\subsection{Sensitivity with Respect to Budget Uncertainty}\label{subsec:bunc}
In this section we study the sensitivity of the solution to the single-robot deployment problem with respect to different values of the uncertainty budget. Specifically, we parameterize the uncertainty budget as: 
\begin{equation*}
\begin{split}
\Gamma&=\gamma \times\sum_{(x,a)\in\mathcal K^\prime} \, \overline \epsilon(x,a),
\end{split}
\end{equation*}
where $\gamma$, referred to as ``factor of uncertainty", takes the values shown in the first column of Table \ref{table:factor}. We assume, as before, that vertex 13 is the target vertex and we consider a constraint threshold  $D = 237$. The results are reported in Table \ref{table:factor}. One can observe that, for this example,  the success probability decreases by about 25\% when going from the nominal model $\gamma = 0$ to the worst-case perturbation model ($\gamma=1$). Interestingly, the success probability decreases steeply as soon as a small amount of uncertainty is allowed (say, $\gamma=0.01$, or 1\% uncertainty budget) and then it saturates. In general, this analysis would allow a system designer to assess the ``robustness" of the deployment protocol.

\begin{table}
\centering
{\footnotesize
\begin{tabular}{|c|c|c|}
        \hline 
        Uncertainty budget  & Theoretical succ. prob. & Empirical succ. prob.\\ \hline 
        $\gamma=0$&$0.9853$ &$0.9840$  \\\hline
         $\gamma=0.005$ &$0.8850$&$0.8910$\\      \hline
          $\gamma=0.0075$ &$0.8390$&$0.8350$\\      \hline
           $\gamma=0.01$ &$0.8227$&$0.8220$\\      \hline
           $\gamma=0.0125$ &$0.8166$&$0.8120$\\      \hline
                      $\gamma=0.025$ &$0.7464$&$0.7450$\\     \hline
        $\gamma=0.25$ &$0.7321$&$0.7340$\\      \hline
$\gamma=1$&$0.7321$&$0.7310$\\      
 \hline
\end{tabular}
}
\caption{Sensitivity of success probability with respect to factor of uncertainty $\gamma$. }
\label{table:factor}
\end{table}

\subsection{Deployment of Robotic Swarms}\label{subsec:swarms}
Finally, in this section we study  how the success probability scales with the number of robots (we recall that the multi-robot deployment is considered successful if  each target is reached by at least one robot). We consider an uncertainty budget:
\[
\Gamma=0.25\times\sum_{(x,a)\in\mathcal K^\prime} \, \overline\epsilon(x,a).
\]  

First, we test the strategy whereby targets are randomly assigned to robots according to a uniform probability distribution. We consider as temporal deadlines $D=\{20,51,82,113,144,175, 206, 237,268,299\} $. The results are presented in Figure \ref{fig:multi_easy}. One can notice that, as expected, the success probability increases with the number of robots and time threshold. We believe that this type of plots can be very valuable in practical applications, as one can then decide upfront how to 
pick the size of the team based on a given temporal deadline and a desired probability of success. Figure \ref{fig:multi_easy} indeed describes
the interplay between the size of the team, the temporal deadline, and the probability of success. For example, by using the RCMDP algorithm with uniform random target assignment, for a time threshold equal to $175$, one would require at least $118$ robots in order to achieve a success probability larger than $0.8$. These results do not consider congestion effects: inclusion of upper bounds on the number of robots that can traverse each edge of a map (that is, edge capacities) will be studied in future research.

\begin{figure}[htb]
\centering
\includegraphics[width=0.85\linewidth]{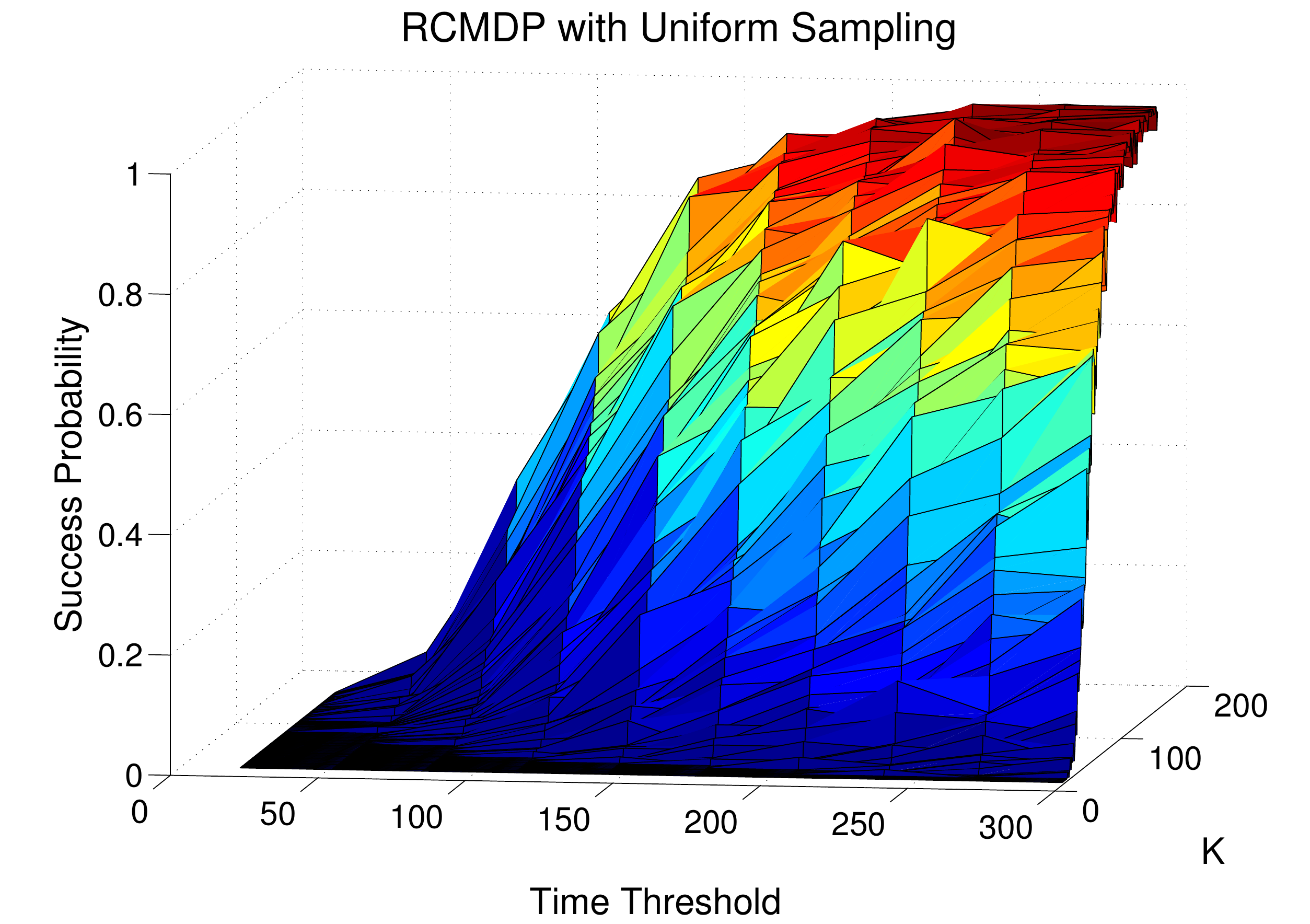}
\caption{Success rate as a function of the number of robots for different temporal deadlines  using random uniform assignment. }
\label{fig:multi_easy}
\end{figure}

We next test the strategy whereby targets are assigned to robots by executing Algorithm \ref{alg:algorithm_different_s} (presented in Section \ref{sec:Efficient_TA}). Results are presented in Figure \ref{fig:multi_smart}. As in the previous case, the success probability increases with the number of robots and time threshold, but it does so much faster. For example,  with a time threshold equal to  $175$, one would require at least  $69$ robots (instead of $118$ as in the previous case) in order to achieve a success probability larger than $0.8$. Visually, this improved performance is evidenced by the steeper shape of the surface when
compared with the analogous one in Figure \ref{fig:multi_easy}.

\begin{figure}[htb]
\centering
\includegraphics[width=0.85\linewidth]{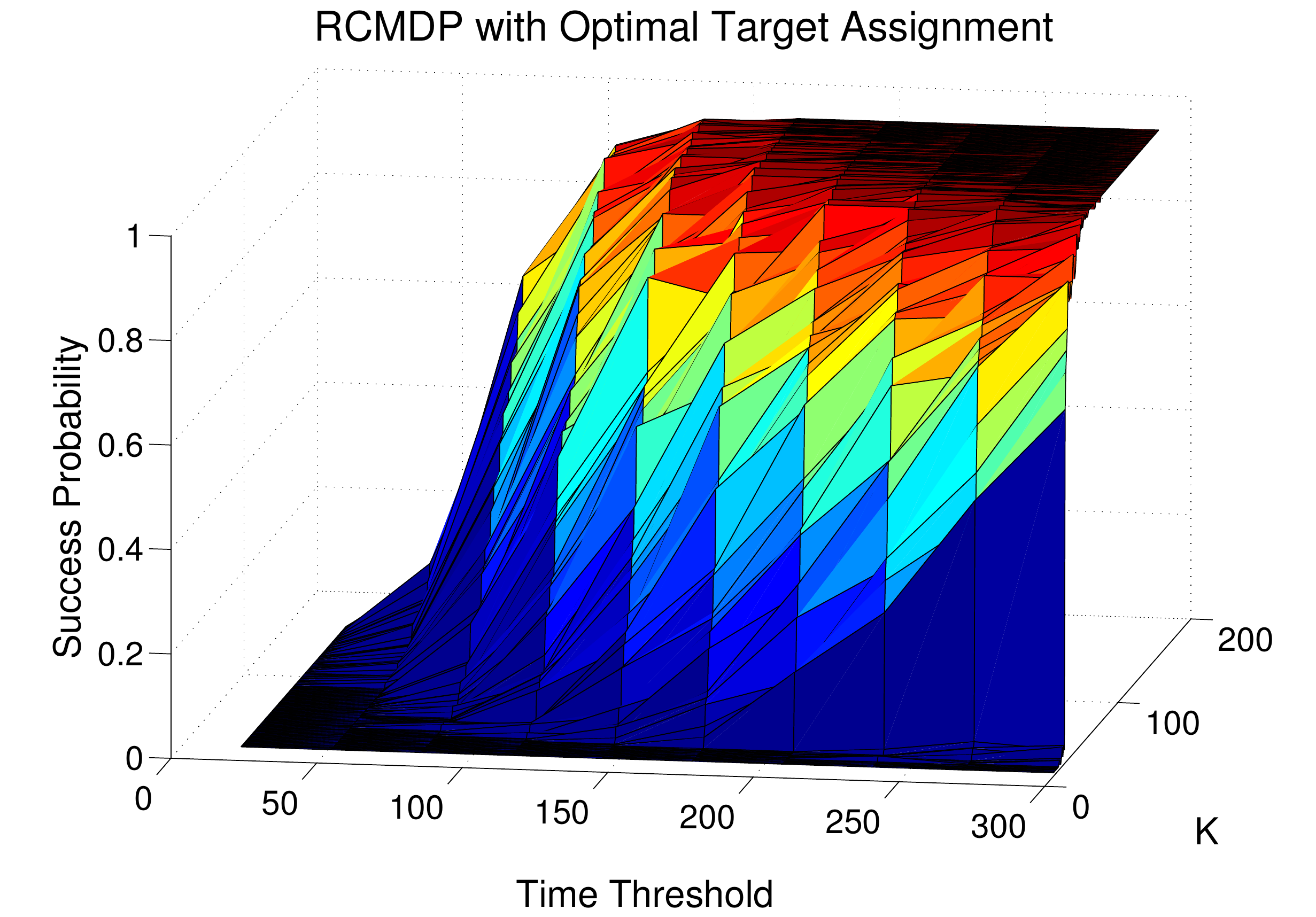}
\caption{Success rate as a function of the number of robots for different temporal deadlines  using optimal target assignment. }
\label{fig:multi_smart}
\end{figure}

 \section{Conclusions}
\label{sec:conclusions}
In this paper we studied the rapid multi-robot deployment problem, where there is an inherent trade-off between mission success and speed of execution. We showed how the rapid deployment problem can be formulated within the theory of constrained Markov Decision Processes, whereby one seeks to compute policies that maximize the probability of successful deployment while ensuring that the expected duration of the deployment task is bounded by a given deadline. To account for uncertainties in the problem parameters, we considered a robust formulation and we proposed efficient solution algorithms, which are novel and of independent interest. Numerical experiments corroborated our findings and showed how
the algorithmic machinery we introduced can be used to address relevant design questions.
For example, it is possible to anticipate the performance of a team of a given size, or 
to decide the size of a team in order to achieve certain performance objectives.

Future research will develop in two directions. From the point of view of deployment, 
methods relying on explicit communication and coordination between the 
agents are of course of interest and will be investigated. From the point of view
of modeling, additional or different types of uncertainties will be considered.
In particular, it is of interest to study the impact of uncertainty in the objective costs
and/or on the transition probabilities. This would for example be of interest when
the safety functions characterizing the edges are only coarsely estimated.

\begin{acknowledgment}
Stefano Carpin is partially supported by ARL under contract MAST-SUPP-13-6-CNC.
Any opinions, findings, and conclusions or recommendations expressed in these materials are those of
he authors and should not be interpreted as representing the official policies, either expressly or
implied, of the funding agencies of the U.S. Government.

The authors thank the reviewers, in particular Reviewer \#2, for numerous thoughtful comments that helped to improve the quality of this paper.
\end{acknowledgment}

%

\bibliographystyle{asmems4}

\bibliography{asmepaperbib}

\section*{Appendix}

\begin{proof}[Proof of Lemma \ref{lemma:sup}]
Fix a policy $\pi$ and an initial distribution $\beta$. First, we note that
\begin{equation}\label{eq:sup_in}
 \sup_{\epsilon\in \mathcal U} \, d_{\epsilon}(\pi,\beta) =  \sum_{t=0}^{\infty} \, E_{\pi,\beta} \, [d(X_t,A_t)] +  \sup_{\epsilon\in \mathcal U} \, \sum_{t=0}^{\infty} \, E_{\pi,\beta} \, [\epsilon(X_t, A_t)],
\end{equation}
which follows from the additivity property of the series and expectation operator, and the fact that both series converge given our standing assumption of $\mathbf{X}'$-transient CMDP. Next we define  $\psi^{\pi, \beta}(\epsilon):=\sum_{t=0}^{\infty} \, E_{\pi,\beta} \, [\epsilon(X_t, A_t)]$  and we show that it is a linear functional. Indeed, one has for all $\epsilon_1, \epsilon_2\in \reals^{| \mathcal{K}'|}$:
\begin{equation*}
\begin{split}
\psi^{\pi, \beta}(\epsilon_1 + \epsilon_2) &= \sum_{t=0}^{\infty} \, E_{\pi,\beta} \, [\epsilon_1(X_t, A_t)+\epsilon_2(X_t, A_t)]\\
&=\sum_{t=0}^{\infty} \, E_{\pi,\beta} \, [\epsilon_1(X_t, A_t)]+\sum_{t=0}^{\infty} \, E_{\pi,\beta} \,  [\epsilon_2(X_t, A_t)]\\
&= \psi^{\pi, \beta}(\epsilon_1)+\psi^{\pi, \beta}(\epsilon_2),
\end{split}
\end{equation*}
where the second equality again follows from the additivity property of the  series and expectation operators and the fact that both series converge. Analogously, one has for all $\epsilon\in \reals^{| \mathcal{K}'|}$ and  $\alpha \in \reals$
\begin{equation*}
\begin{split}
\psi^{\pi, \beta}(\alpha \, \epsilon) &= \sum_{t=0}^{\infty} \, E_{\pi,\beta} \, [\alpha \, \epsilon(X_t, A_t)]\\
&=\alpha \, \sum_{t=0}^{\infty} \, E_{\pi,\beta} \, [  \epsilon(X_t, A_t)] = \alpha \, \psi^{\pi, \beta}(\epsilon). 
\end{split}
\end{equation*}
Hence, the functional $\psi^{\pi, \beta}$ is linear. Since $\mathcal U$ is a finite dimensional set and $\psi^{\pi, \beta}$ is a linear functional, it follows that $\psi^{\pi, \beta}$ is continuous. Note that $\mathcal U$ is closed and bounded, hence by the extreme value theorem the functional $\psi^{\pi, \beta}$ achieve its maximum. Combining this fact with equation \eqref{eq:sup_in}, one obtains the claim.
\end{proof}

\begin{proof}[Proof of theorem \ref{thm:OM}]
As in the proof of Lemma \ref{lemma:sup}, let $\psi^{\pi, \beta}(\epsilon):=\sum_{t=0}^{\infty} \, E_{\pi,\beta} \, [\epsilon(X_t, A_t)]$. As shown in Lemma \ref{lemma:sup},  $\psi^{\pi, \beta}(\epsilon)$ is a linear functional with respect to $\epsilon$. The constraint $\max_{\epsilon\in \mathcal U} \, d_{\epsilon}(\pi,\beta) \leq D$ is clearly equivalent to the constraint
\[
  \sum_{t=0}^{\infty} \, E_{\pi,\beta} \, [d(X_t,A_t)]    + \max_{\epsilon\in \mathcal U} \,  \psi^{\pi, \beta}(\epsilon) \leq D.
\]
Note that $\mathcal U$ is the intersection of a hyper-rectangle, i.e., $\{\epsilon: 0\leq \epsilon(x,a)\leq \overline\epsilon(x,a),\forall (x,a)\in\mathcal K^\prime\}$, and a simplex, i.e., $\{\epsilon:\sum_{(x,a)\in \mathcal K'} \, \epsilon(x,a)\leq \Gamma\}$. Thus, $\mathcal U$ has a finite number of vertices.
Let $\mathcal V$ be the set of vertices of the polytopic set $\mathcal U$ and let $|\mathcal V|$ be its cardinality. Since the maximum for a linear optimization problem over a bounded polytopic  set is achieved at one of the vertices, then the constraint
\[
  \sum_{t=0}^{\infty} \, E_{\pi,\beta} \, [d(X_t,A_t)]    + \max_{\epsilon\in \mathcal U} \,  \psi^{\pi, \beta}(\epsilon) \leq D,
\]
is equivalent to 
\[
 \sum_{t=0}^{\infty} \, E_{\pi,\beta} \, [d(X_t,A_t)]    + \psi^{\pi, \beta}(\epsilon_j) \leq D \quad  \text{ for all } \epsilon_j \in \mathcal V.
\]

Therefore, problem RCMDP is equivalent to the problem:
\begin{align*}
&\min_{\pi\in\Pi_{\text{M}}} c(\pi,\beta) \label{COP} \\
&\textrm{s.t.} \sum_{t=0}^{\infty} \, E_{\pi,\beta} \, [d(X_t,A_t) +\epsilon_j(X_t, A_t)] \leq D \quad  \text{ for all } \epsilon_j \in \mathcal V,
\end{align*}
where the number of constraints is $|\mathcal V|$. According to Theorem 8.1 in \cite{Altman1996}, this problem is equivalent to the problem
\begin{alignat*}{2}
\min_{\rho} & & \quad&  \sum_{(x,a) \in \mathcal{K}'}\rho(x,a)c(x, a)\\  
\text{s.t.} & & \quad&\sum_{y\in \mathbf{X'}}\sum_{a\in A(y)}\rho(y,a)\left[\delta_x(y)- \mathcal{P}_{yx}^a\right]= \beta(x), \forall x\in \mathbf{X'}\\
& &\quad &  \!\!\!\!\!\!\!\!\sum_{(x,a) \in \mathcal{K}'}\rho(x,a)(d(x, a)+\epsilon_j(x,a)) \leq D    \text{ for all } \epsilon_j \in \mathcal V\\  
& &\quad &\rho(x,a)\geq 0,\,\, \forall (x,a)\in \mathcal{K}'.
\end{alignat*}
Equivalency is in the sense that the solution to the above linear optimization problem induces an optimal policy, see equation \eqref{eq:induced} in section \ref{subsection:CMDP}.

Finally, the above problem can be rewritten as 
\begin{alignat*}{2}
\min_{\rho}  & & \quad&  \sum_{(x,a) \in \mathcal{K}'}\rho(x,a)c(x, a)\\  
\text{s.t.} & & \quad&\sum_{y\in \mathbf{X'}}\sum_{a\in A(y)}\rho(y,a)\left[\delta_x(y)- \mathcal{P}_{yx}^a\right]= \beta(x), \forall x\in \mathbf{X'}\\
& &\quad &\max_{\epsilon\in \mathcal U}\sum_{(x,a) \in \mathcal{K}'}\rho(x,a)(d(x, a)+\epsilon(x,a)) \leq D   \\  
& &\quad &\rho(x,a)\geq 0,\,\, \forall (x,a)\in  \mathcal{K}'.
\end{alignat*}

This concludes the proof.
\end{proof}

\end{document}